\let\originalleft\left
\let\originalright\right
\renewcommand{\left}{\mathopen{}\mathclose\bgroup\originalleft}
\renewcommand{\right}{\aftergroup\egroup\originalright}
\theoremstyle{plain}
\newtheorem{theorem}{Theorem}[section]
\theoremstyle{definition}
\theoremstyle{remark}
\newcommand{\ie}{\textit{i.e.}}
\newcommand{\eg}{\textit{e.g.}}
\newcommand{\Ahcene}{Ahc\`{e}ne }
\newcommand{\N}{\mathbb{N}}
\newcommand{\R}{\mathbb{R}}
\newcommand{\Cs}{\mathcal{C}}
\newcommand{\Os}{\mathcal{O}}
\newcommand{\Ps}{\mathcal{P}}
\newcommand{\Xs}{\mathcal{X}}
\newcommand{\Zs}{\mathcal{Z}}
\newcommand{\ba}{\mathbf{a}}
\newcommand{\A}{\mathbf{A}}
\newcommand{\bb}{\mathbf{b}}
\newcommand{\B}{\mathbf{B}}
\newcommand{\q}{\mathbf{q}}
\newcommand{\x}{\mathbf{x}}
\newcommand{\X}{\mathbf{X}}
\newcommand{\z}{\mathbf{z}}
\newcommand{\Z}{\mathbf{Z}}
\newcommand{\conv}{\operatorname{conv}}
\newcommand{\argmin}{\operatorname{arg\,min}}
\newcommand{\argmax}{\operatorname{arg\,max}}
\newcommand*{\transpose}{{\mkern-1.5mu\mathsf{T}}}
\title{Archetypal Analysis++: \\ Rethinking the Initialization Strategy}
\author{\name Sebastian Mair \email sebastian.mair@it.uu.se \\
      \addr Uppsala University, Sweden
      \AND
      \name Jens Sjölund \email jens.sjolund@it.uu.se \\
      \addr Uppsala University, Sweden
}
\begin{document}

\maketitle

\begin{abstract}
Archetypal analysis is a matrix factorization method with convexity constraints.
Due to local minima, a good initialization is essential, but frequently used initialization methods yield either sub-optimal starting points or are prone to get stuck in poor local minima. 
In this paper, we propose archetypal analysis++ (AA++), a probabilistic initialization strategy for archetypal analysis that sequentially samples points based on their influence on the objective function, similar to $k$-means++.
In fact, we argue that $k$-means++ already approximates the proposed initialization method.
Furthermore, we suggest to adapt an efficient Monte Carlo approximation of $k$-means++ to AA++.
In an extensive empirical evaluation of 15~real-world data sets of varying sizes and dimensionalities and considering two pre-processing strategies, we show that AA++ almost always outperforms all baselines, including the most frequently used ones.
\looseness=-1
\end{abstract}

\section{Introduction} \label{intro}

Archetypal analysis (AA) \citep{cutler1994} is a matrix factorization method with convexity constraints.
The idea is to represent every data point as a convex combination of points, called archetypes, located on the boundary of the data set.
Thus, archetypes can be seen as well-separated observations that summarize the most relevant extremes of the data.
The convexity constraints also give archetypal analysis a natural interpretation. 
An example is shown in Figure~\ref{fig:AA}.
\looseness=-1

\begin{wrapfigure}{r}{.45\textwidth}
\centering
\includegraphics[width=.45\textwidth]{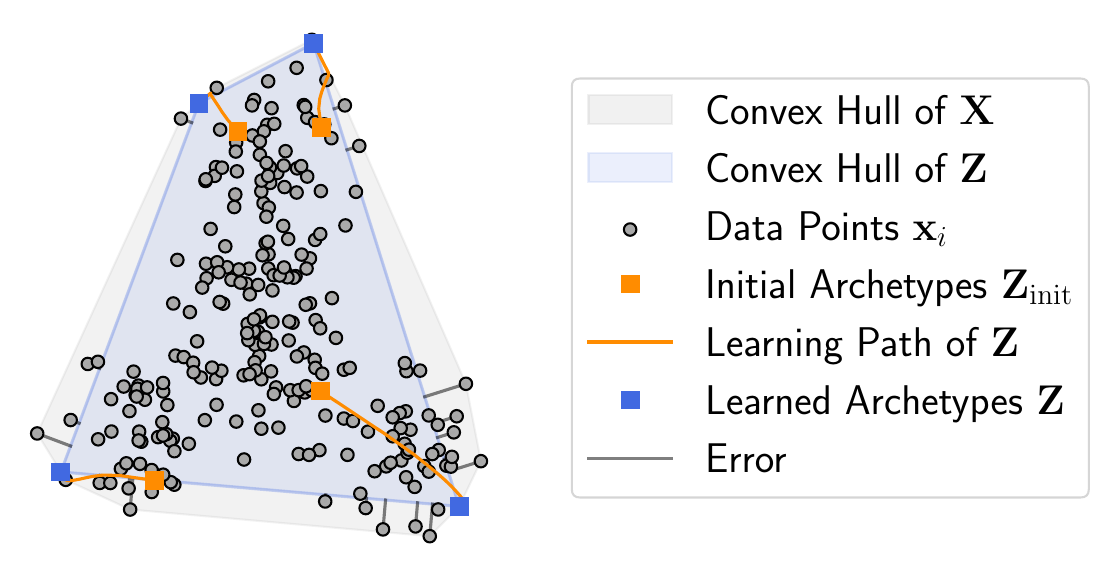}
\caption{Archetypal analysis in two dimensions with $k=4$ randomly initialized archetypes $\{\z_1,\ldots,\z_4\}$ shown in orange. The archetypes after optimization are depicted in blue.\looseness=-1}
\label{fig:AA}
\vspace{-0.5cm}
\end{wrapfigure}
Archetypal analysis has been applied, \eg, for single cells gene expression \citep{thogersen2013archetypal,korem2015geometry}, bioinformatics \citep{hart2015inferring}, apparel design \citep{vinue2015archetypoids}, chemical spaces of small organic molecules \citep{keller2021learning}, geophysical data \citep{black2022archetypal}, large-scale climate drivers \citep{hannachi2017archetypal,chapman2022large}, and population genetics \citep{gimbernat2022archetypal}.
\looseness=-1

To improve the computation of archetypal analysis, various optimization approaches \citep{bauckhage2009making,morup2012archetypal,chen2014fast,bauckhage2015archetypal,abrol2020geometric} and approximations \citep{mair2017frame,damle2017geometric,mei2018online,mair2019coresets,han2022probabilistic} have been proposed.
However, the earliest point of attack for obtaining a good solution is the initialization of the archetypes.
Surprisingly, this has barely been investigated.

In the original paper, \citet{cutler1994} use a random initialization, \ie, choosing points uniformly at random from the data set, which was adopted by many others, \eg,
\citet{eugster2011weighted,seth2015archetypal,hinrich2016archetypal,hannachi2017archetypal,mair2017frame,mei2018online,krohne2019classification,olsen2022combining} to name a few.
Furthermore, \citet{cutler1994} state that a careful initialization improves the convergence speed and that archetypes should not be initialized too close to each other.
\looseness=-1

This idea serves as an argument for using the \emph{FurthestFirst} approach \citep{gonzalez1985clustering,hochbaum1985best}, yielding a well-separated initialization, \eg, for $k$-means clustering \citep{lloyd1982least}.
Inspired by FurthestFirst, \citet{morup2010archetypal,morup2012archetypal} propose a modification for archetypal analysis called \emph{FurthestSum}, which focuses on boundary points rather than well-separated points.
Here, boundary points refer to points on the boundary of the convex hull of the data.
Since then, FurthestSum has established itself as one of the default initialization strategies for archetypal analysis which is used, \eg, by 
\citet{thogersen2013archetypal,hinrich2016archetypal,mair2019coresets,abrol2020geometric,beck2022archetypal,black2022archetypal,chapman2022large,gimbernat2022archetypal}.

Despite its popularity, FurthestSum has also been criticized.
For example, \citet{suleman2017ill} states that FurthestSum is prone to selecting redundant archetypes, primarily when many archetypes are used.
Redundant archetypes lie in the convex hull of the already selected archetypes and thus do not lower the overall error.
In addition, \citet{krohne2019classification} and \citet{olsen2022combining} report better results with a random uniform initialization than with FurthestSum.
A possible explanation is that FurthestSum's early focus on sub-optimal boundary points risks trapping the optimization of archetypal analysis in poor local minima.
\looseness=-1

\paragraph{Contributions.}
In this paper, we (i) propose archetypal analysis++ (AA++), an initialization strategy inspired by $k$-means++.
Our theoretical results show that AA++ decreases the objective function faster than a uniform initialization in expectation.
Furthermore, we (ii) argue that the $k$-means++ initialization can be seen as an approximation to our proposed strategy and that a Monte Carlo approximation to the $k$-means++ initialization can be adapted for AA++ for a more efficient initialization.
Most importantly, we (iii) empirically demonstrate that our proposed initialization for archetypal analysis almost always outperforms all baselines on 15~real-world data sets.
Lastly, (iv) our extensive empirical evaluation serves as the first comprehensive evaluation of different initialization strategies for AA which is of independent interest.
\looseness=-1

\section{Preliminaries}

Before introducing archetypal analysis, we briefly revisit $k$-means clustering since we will build upon similar ideas.
We use the following compact notation: $[n] = \{1,2,\ldots,n\}$ for an $n\in\N$.

\paragraph{$k$-means Clustering.}

Let $\Xs = \{ \x_i \}_{i=1}^n \subset \R^d$ be a data set of $n$ points in $d$ dimensions, $\Zs = \{\z_1,\ldots,\z_k\}$ be a set of $k$ cluster centers, and $d(\x,\Zs)^2 = \min_{\q \in \Zs} \|\x-\q\|_2^2$ be the minimal squared Euclidean distance from a data point $\x$ to the closest center in $\Zs$.
The $k$-means clustering problem has the following objective: 
\looseness=-1
\begin{align*}
\phi_\Xs(\Zs) = \sum_{\x \in \Xs} d(\x,\Zs)^2 = \sum_{\x \in \Xs} \min_{\q\in \Zs} \|\x-\q\|_2^2.
\end{align*}
Often, the cluster centers $\Zs$ of $k$-means are initialized using the $k$-means++ initialization procedure \citep{arthur2007kmeans,ostrovsky2013effectiveness}, which is guaranteed to yield a solution that is $\Os(\log k)$-competitive with the optimal clustering.
The idea is as follows.
The first center is chosen uniformly at random.
Then, the remaining $k-1$ cluster centers are chosen according to a probability distribution where the probability of choosing a point $\x$ is proportional to the minimal squared distance to the already chosen cluster centers, \ie, $p(\x) \propto d(\x,\Zs)^2$.
The procedure is outlined in Algorithm~\ref{alg:kmeanspp}, which can be found in Appendix~\ref{app:alg}.
\looseness=-1

\paragraph{Archetypal Analysis.}
Let $\Xs = \{ \x_i \}_{i=1}^n \subset \R^d$ be a data set consisting of $n \in \N$ \mbox{$d$-dimensional} data points arranged as rows in the design matrix $\X \in \R^{n \times d}$. 
The idea of archetypal analysis (AA) \citep{cutler1994} is to (approximately) represent every data point $\x_i$ as a convex combination of $k\in \N$ archetypes $\Zs=\{\z_1,\ldots,\z_k\}$,  \ie,
\begin{align}
\label{eq:AA_get_a}
\x_i^\transpose \approx \ba_i^\transpose\Z,
\quad \ba_i^\transpose \mathbf{1} = 1, \quad \ba_i \geq 0,
\end{align}
where the matrix \mbox{$\Z \in \R^{k \times d}$} contains the archetypes as rows and the vector $\ba_i \in \R^k$ defines the weights for the $i$th data point.
Here, $\mathbf{1}$ denotes the vector of ones and $\ba_i \geq 0$ is meant element-wise.
The archetypes $\z_j$ $(j\in[k])$ themselves are also represented (exactly) as convex combinations, but of the data points, \ie,
\begin{align*}
\z_j^\transpose = \bb_j^\transpose \X,
\quad \bb_j^\transpose \mathbf{1} = 1, \quad \bb_j \geq 0,
\end{align*}
where $\bb_j \in \R^n$ is the weight vector of the $j$th archetype.
Let $\A \in \R^{n \times k}$ and $\B \in \R^{k \times n}$ be the matrices consisting of the weights $\ba_i$ $(i\in[n])$ and $\bb_j$ $(j\in[k])$.
Then, archetypal analysis yields an approximate factorization of the design matrix $\X$ as follows
\begin{align}\label{eq:mf}
\X \approx \A \B \X = \A \Z,
\end{align}
where $\Z=\B \X \in \R^{k \times d}$ is the matrix of archetypes.
Due to the convexity constraints, the weight matrices $\A$ and $\B$ are row-stochastic.
The weight matrices $\A$ and $\B$ are typically determined by minimizing the approximation error in Frobenius norm, resulting in the following optimization problem
\begin{equation}
\begin{aligned}
& \underset{\A,\B}{\text{minimize}}
& &  \| \X - \A \B \X \|_{\operatorname{F}}^2\\
& \text{subject to}
& & \A \mathbf{1} =1, \ \A \geq 0 \quad \text{and} \quad \B \mathbf{1} =1, \ \B \geq 0.
\end{aligned}\label{eq:obj}
\end{equation}
This can be equivalently expressed as minimizing the sum of projections of the data points on the archetype-induced convex hull as follows
\begin{align}\label{eq:sum_proj}
\| \X - \A \Z \|_{\operatorname{F}}^2
= \sum_{\x \in \Xs} \min_{\q \in \conv(\Zs)} \| \x - \q \|_2^2,
\end{align}
where $\conv(\Zs)$ refers to the convex hull of the set $\Zs$.
An example of archetypal analysis and the projection errors is depicted in Figure~\ref{fig:AA}.
The optimization problem is a generalized low-rank problem \citep{udell2016generalized}, which is biconvex but not convex.
Because it is biconvex, a local optimum can be found via an alternating optimization scheme as introduced by \citet{cutler1994}, such as the standard one outlined in Algorithm~\ref{alg:AA}, which can be found in Appendix~\ref{app:alg}.
However, the quality of such a local optimum is directly dependent on the initialization.

\paragraph{Archetype Initializations.}
Popular ways of initializing the archetypes~$\Zs$ is by using data points chosen uniformly at random (called \textit{Uniform}) or the FurthestSum procedure \citep{morup2010archetypal,morup2012archetypal}, but we also introduce FurthestFirst.
Originally proposed for the metric $k$-center problem, the \emph{FurthestFirst} algorithm \citep{gonzalez1985clustering,hochbaum1985best} selects the first center/archetype uniformly at random and iteratively adds the point that is furthest away from the closest already selected center/archetype.
Formally, the index of the next point is $j^\text{next} = \argmax_{i\in[n]} \left ( \min_{\q\in\Zs} \|\x_i-\q\|_2 \right)$.
Specifically for archetypal analysis, \citet{morup2010archetypal,morup2012archetypal} propose a modification of FurthestFirst called \emph{FurthestSum}, which sums over the distances of the already selected points, \ie, $j^\text{next} = \argmax_{i\in[n]} \left ( \sum_{\q\in\Zs} \|\x_i-\q\|_2 \right)$.
To increase the performance, the first point, which was chosen uniformly at random, is usually discarded in the end and replaced by a new point chosen via the criteria outlined above.

Although not designed as an initialization strategy, we also mention \emph{AAcoreset} \citep{mair2019coresets} as it was used for the initialization of archetypes by \citet{black2022archetypal} and \citet{chapman2022large}.
A coreset is a small subset of the original data set that allows for a more efficient training of archetypal analysis.
The selection probability of every data point is given by $p(\x) \propto \|\x-\mu\|_2^2$, where $\mu$ is the mean of the data set $\Xs$.
\looseness=-1

\section{Archetypal Analysis++}

The idea of our proposed archetypal analysis++ (AA++) initialization procedure is very similar to the one of $k$-means++.
We begin by choosing the first archetype $\z_1$ uniformly at random.
The second archetype is chosen according to a distribution that assigns probabilities proportional to the distances from the first archetype, \ie, $p(\x) \propto \|\x-\z_1\|_2^2$.
The remaining $k-2$ archetypes are chosen according to a probability distribution where the probability of choosing a point $\x$ is proportional to the minimum distance to the convex hull of the already chosen archetypes, \ie, $p(\x) \propto \min_{\q \in \conv(\{\z_1,\ldots,\z_k\})} \| \x - \q \|_2^2$.
This procedure is outlined in Algorithm~\ref{alg:AApp}.

With every additional point sampled by AA++, the convex hull of the initialized factors, \ie, $\conv(\Zs)$, expands.
This is because selecting a point outside the convex hull of $\Zs$, \ie, a point in $\{ \x \in \Xs \mid \x \notin \conv(\Zs) \}$, is by definition not contained in the convex hull of $\Zs$ and hence expands it.
In contrast, a point within the convex hull of $\Zs$ would not increase its volume but has zero probability of being selected by AA++.
Note that a uniform initialization also assigns probability mass to points in the interior of $\conv(\Zs)$.
Selecting a new point can make a previously selected point redundant, \ie, it then lies in the convex hull of previously selected archetypes and does not help to increase its convex hull.
However, this is not a problem in practice since redundant archetypes can be recovered during optimization.

\begin{algorithm}[t]
    \caption{Archetypal Analysis++ Initialization}
    \label{alg:AApp}
  \begin{algorithmic}[1]
    \State {\bfseries Input:} Set of $n$ data points $\Xs$, number of archetypes $k$
    \State {\bfseries Output:} Initial set of $k$ archetypes $\Zs$
    \State Sample an index $i$ uniformly at random from $[n]$, \ie, using $p(i) = n^{-1}$
    \State Append $\x_i$ to $\Zs$
    \While{$|\Zs|<k$}
      \State Sample $i$ using $p(i) \propto \min_{\q \in \conv(\Zs)} \|\x_i-\q\|_2^2$
      \State Append $\x_i$ to $\Zs$
    \EndWhile
  \end{algorithmic}
\end{algorithm}

\begin{figure}[t]
\centering
\includegraphics[width=.7\textwidth]{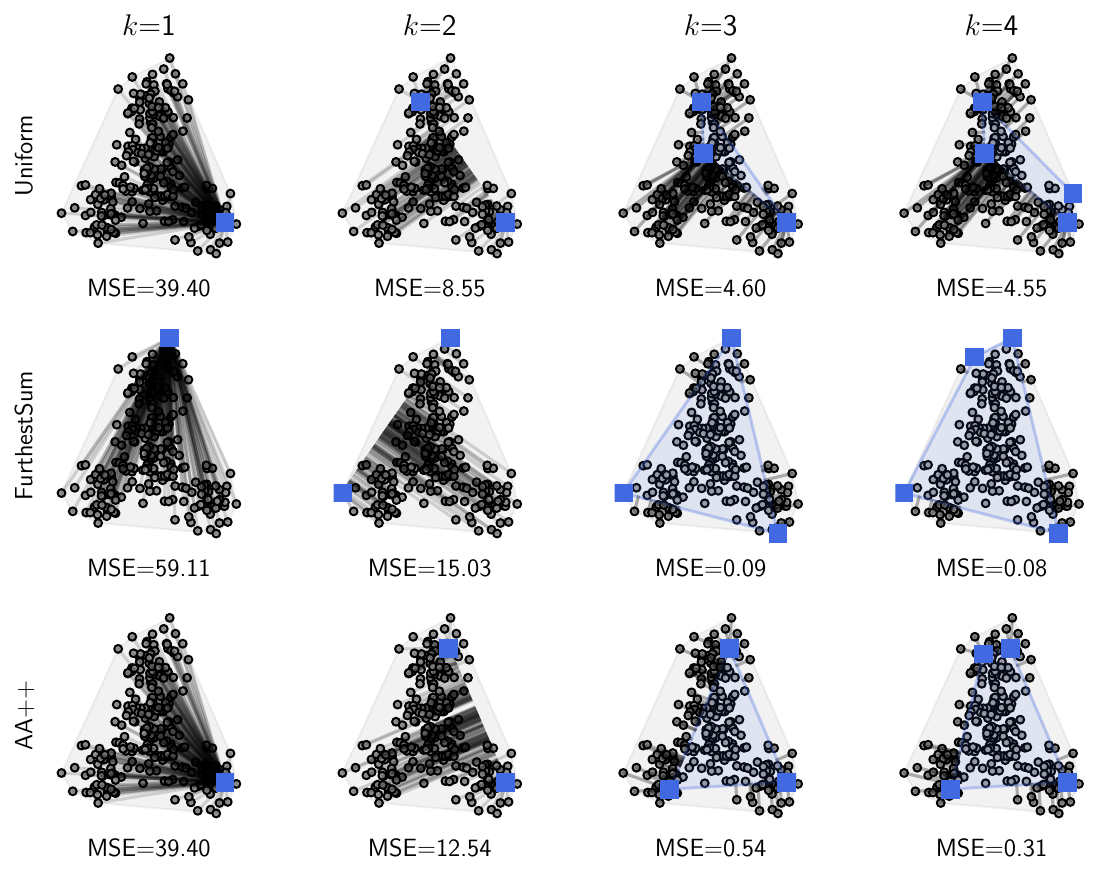}
\caption{A comparison of Uniform, FurthestSum, and the proposed AA++ when consecutively initializing $k=4$ archetypes. MSE denotes the mean square error, \ie, Equation~\eqref{eq:sum_proj} multiplied by $n^{-1}$.}
\label{fig:idea}
\end{figure}

Note that the optimization problem in line 6 of Algorithm~\ref{alg:AApp} is the same as in Equation~\eqref{eq:AA_get_a}.
Thus, parts of the archetypal analysis implementations can be re-used, simplifying the implementation of AA++.
Besides, AA++ has no hyperparameters, and line 6 is trivially parallelizable. 

Figure~\ref{fig:idea} depicts how Uniform, FurthestSum, and our AA++ strategy initialize $\Zs$ on a synthetic data set.
The gray projection errors can be seen as selection probabilities for AA++.
In contrast, the selection probabilities of Uniform (not depicted) are not data-dependent, and FurthestSum sums the distances to the already selected archetypes up, \ie, not the depicted distances.
The sum of all depicted projections resembles Equation~\eqref{eq:sum_proj}, and we provide the Mean Squared Error (MSE), \ie, Equation~\eqref{eq:sum_proj} multiplied by $n^{-1}$, per iteration $k$.

In terms of MSE, FurthestSum works slightly better than AA++ on this synthetic data set.
However, this is no longer the case with higher-dimensional real-world data, as our empirical evaluation later shows.

\paragraph{Theoretical Analysis.}

We first show that by adding a new archetype $\z$ to the set of archetypes $\Zs$ in the while loop of AA++ in Algorithm~\ref{alg:AApp}, the objective function is guaranteed to decrease.
\begin{restatable}{lemma}{AAppLemma}
\label{lemma}
Let $\| \X - \A \Z \|_{\operatorname{F}}^2 > 0$, \ie, there are points yielding projection errors.
Then, adding a point $\x\in\Xs \setminus \Zs$ to the set of archetypes $\Zs$ according to AA++ (Algorithm~\ref{alg:AApp}) is guaranteed to decrease the objective function.
\looseness=-1
\end{restatable}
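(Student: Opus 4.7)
The lemma is a short, direct consequence of two facts: the AA++ sampling distribution is only supported on points outside $\conv(\Zs)$, and enlarging $\Zs$ enlarges its convex hull. I would organize the argument into three quick steps.

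First, I would verify that the sampling step in line~5 of Algorithm~\ref{alg:AApp} is well-defined and that the sampled point lies strictly outside $\conv(\Zs)$. By hypothesis $\|\X - \A\Z\|_\mathrm{F}^2 > 0$, which by Equation~\eqref{eq:sum_proj} means that $\sum_{\x_i \in \Xs} \min_{\q \in \conv(\Zs)} \|\x_i - \q\|_2^2 > 0$, so the unnormalized weights $\min_{\q \in \conv(\Zs)} \|\x_i - \q\|_2^2$ are summable to a positive normalizing constant, and the distribution $p(i)$ is a valid probability distribution on $[n]$. Moreover, a point $\x$ with $p(\x) > 0$ must satisfy $\min_{\q \in \conv(\Zs)} \|\x - \q\|_2^2 > 0$, so in particular $\x \notin \conv(\Zs)$ and $\x \notin \Zs$.

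Second, I would establish that adding any point can only weakly decrease the objective. Set $\Zs' = \Zs \cup \{\x\}$. Since $\Zs \subseteq \Zs'$, monotonicity of the convex hull gives $\conv(\Zs) \subseteq \conv(\Zs')$, and hence for every $\x_i \in \Xs$,
\begin{equation*}
\min_{\q \in \conv(\Zs')} \|\x_i - \q\|_2^2 \;\leq\; \min_{\q \in \conv(\Zs)} \|\x_i - \q\|_2^2.
\end{equation*}
Summing over $i$ and invoking Equation~\eqref{eq:sum_proj} shows that the objective evaluated at $\Zs'$ is no larger than the objective at $\Zs$.

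Third, I would upgrade this inequality to a strict decrease by isolating the contribution of the newly sampled point $\x$. By step one, $\min_{\q \in \conv(\Zs)} \|\x - \q\|_2^2 > 0$, whereas after the update $\x \in \Zs' \subseteq \conv(\Zs')$ so $\min_{\q \in \conv(\Zs')} \|\x - \q\|_2^2 = 0$. Combining this strict drop at index $\x$ with the weak monotonicity at every other index yields the claimed strict decrease of the objective.

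Because each step is either a direct consequence of the definition of the AA++ distribution or an elementary monotonicity argument, I do not anticipate a substantial obstacle; the only thing to be careful about is to explicitly tie the positivity of the sampling weights to the hypothesis $\|\X - \A\Z\|_\mathrm{F}^2 > 0$, so that the sampled point really sits outside $\conv(\Zs)$ and genuinely contributes a strictly positive decrement.
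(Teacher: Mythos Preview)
Your proposal is correct and follows essentially the same logic as the paper's proof: the sampled point lies outside $\conv(\Zs)$ (since AA++ assigns zero mass to interior points and the hypothesis guarantees some exterior point exists), and adding it strictly reduces the objective because its own projection error drops to zero while all other terms can only weakly decrease. The paper phrases this via a three-case geometric split (inside the polytope, outside and alone on its face, outside with neighbors on the same face), whereas your monotonicity-of-the-convex-hull argument is a cleaner packaging of the same idea.
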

Note that a uniform initialization does not have this guarantee since it also assigns probability mass to points that do not yield a reduction.
We now show that, in expectation, AA++ reduces the objective function more than the uniform initialization; equality is just met if all candidate points have an equal projection distance.
\looseness=-1
\begin{restatable}{proposition}{AAppProp}
\label{prop}
Let $|\Zs| \geq 1$ and $\| \X - \A \Z \|_{\operatorname{F}}^2 > 0$.
Per iteration, AA++ (Algorithm~\ref{alg:AApp}) yields a reduction of at least as much as the reduction for the uniform initialization in expectation.
The reduction is equal only if every point in $\Xs\setminus\Zs$ has the same projection onto the current set of archetypes $\Zs$.
Furthermore, using a uniform initialization cannot reduce the objective function more than AA++ in expectation.
\end{restatable}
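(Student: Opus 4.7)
The idea is to express both expected reductions as sums over $\Xs$ and compare them directly. Writing $e(\x)=\min_{\q\in\conv(\Zs)}\|\x-\q\|_2^2$ for the current projection error, $\phi=\sum_{\x\in\Xs}e(\x)=\|\X-\A\Z\|_\mathrm{F}^2>0$, and $\Delta(\x)\geq 0$ for the objective drop when $\x$ is appended to $\Zs$, the quantities to compare are $\mathbb{E}_{\text{AA++}}[\Delta]=\sum_{\x\in\Xs}(e(\x)/\phi)\,\Delta(\x)$ and $\mathbb{E}_{\text{unif}}[\Delta]=\frac{1}{n}\sum_{\x\in\Xs}\Delta(\x)$; the AA++ sum is effectively supported on $\Xs\setminus\Zs$ since $e$ vanishes on $\Zs$. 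The single structural input, inherited from the proof of Lemma~\ref{lemma}, is the pointwise bound $\Delta(\x)\geq e(\x)$: the selected point's own error drops from $e(\x)$ to $0$ while every other point's error is non-increasing because $\conv(\Zs\cup\{\x\})\supseteq \conv(\Zs)$.

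Plugging $\Delta(\x)\geq e(\x)$ into the AA++ expectation yields $\mathbb{E}_{\text{AA++}}[\Delta]\geq \sum_{\x}e(\x)^2/\phi$, and a Cauchy--Schwarz step then gives $\sum_{\x}e(\x)^2/\phi\geq \phi/n$ with equality iff $e(\x)$ is constant on its support---equivalently, iff every point of $\Xs\setminus\Zs$ shares the same projection onto $\conv(\Zs)$, which is exactly the stated equality condition. The uniform strategy admits the matching lower bound $\mathbb{E}_{\text{unif}}[\Delta]\geq \phi/n$ by the same pointwise inequality, and in the degenerate regime where this equality holds one also has $r(\x):=\Delta(\x)-e(\x)\equiv 0$, so the two expected reductions literally coincide. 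The final sentence of the proposition is then immediate.

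The technically delicate point---and what I expect to be the main obstacle---is upgrading this comparison of lower bounds to the literal inequality $\mathbb{E}_{\text{AA++}}[\Delta]\geq \mathbb{E}_{\text{unif}}[\Delta]$. Writing the difference as $(n/\phi)\,\mathrm{Cov}_{\text{unif}}(e,\Delta)$ and decomposing $\Delta=e+r$ with $r(\x)\geq 0$ (the indirect reduction at points other than $\x$) yields $\mathrm{Var}_{\text{unif}}(e)+\mathrm{Cov}_{\text{unif}}(e,r)$. The variance is non-negative and carries the equality condition, but controlling $\mathrm{Cov}_{\text{unif}}(e,r)$ requires a geometric argument: intuitively, a point $\x$ far from $\conv(\Zs)$ extends the hull more and so reduces other far-away points' errors by more, so $e$ and $r$ should be non-negatively correlated; moreover, each point in $\Zs$ carries $e=r=0$ and sits below both means, contributing positively to the covariance. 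I would discharge this step either via the standard projection identity $e(\y)-e_{\text{new}}(\y)\geq \langle \y-\q(\y),\x-\q(\y)\rangle^2/\|\x-\q(\y)\|_2^2$ along the segment from the projection $\q(\y)$ to $\x$, or by arguing that the Cauchy--Schwarz slack $\sum_{\x}e(\x)^2/\phi-\phi/n$ already dominates any residual negative contribution from $\mathrm{Cov}(e,r)$.
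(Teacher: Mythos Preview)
Your core inequality is the paper's: it sets $r_i=d(\x_i,\Zs)=e(\x_i)$ and $p_i=r_i/\phi$, orders both sequences, and applies Chebyshev's sum inequality to the comonotone pair $(p_i),(r_i)$ to obtain $\sum_i p_ir_i\ge\frac1n\sum_i r_i$, which is exactly your Cauchy--Schwarz step $\sum_\x e(\x)^2/\phi\ge\phi/n$; the equality case (all $e(\x_i)$ equal on $\Xs\setminus\Zs$) is read off in the same way.

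Where you diverge is precisely at the gap you flag. The paper does \emph{not} upgrade the comparison of lower bounds to a comparison of the true drops $\Delta(\x)$. Having remarked in its preamble that $e(\x_i)$ is only a lower bound on the reduction, it nevertheless works with $r_i=e(\x_i)$ as ``the (potential) reduction'' for the rest of the argument and concludes directly from $\mathbb{E}_p[e]\ge\mathbb{E}_u[e]$. Your indirect term $r(\x)=\Delta(\x)-e(\x)$, and hence your $\mathrm{Cov}_{\text{unif}}(e,r)$, never appears. So you are not missing an ingredient the paper supplies; you are being more scrupulous than the paper, whose argument in effect establishes the claim for the surrogate $\Delta\equiv e$ (equivalently, it compares the guaranteed lower bounds on the expected drop under the two sampling schemes). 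If your aim is to match the paper, the Chebyshev/Cauchy--Schwarz step on $e$ together with the pointwise bound $\Delta\ge e$ is literally all that is used. As a side remark, your heuristic that $e$ and $r$ are non-negatively correlated is not true in general---an isolated outlier can carry the largest $e$ yet induce almost no indirect reduction for the remaining points---so closing the gap via $\mathrm{Cov}_{\text{unif}}(e,r)\ge 0$ would need a different argument.
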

The proofs of both statements can be found in Appendix~\ref{app:proofs}.

\paragraph{Complexity Analysis.}

The proposed initialization strategy outlined in Algorithm~\ref{alg:AApp} selects the first archetype uniformly at random.
The remaining $k-1$ archetypes are chosen according to a probability proportional to the squared distance between the candidate point and the convex hull of the already chosen archetypes.
To compute this projection, a quadratic program (QP) has to be solved.
Thus, the complexity of Algorithm~\ref{alg:AApp} is $\Os(n \cdot (k-1) \cdot \operatorname{QP})$.
It depends not only on the size of the data set~$n$ and the number of archetypes $k$ to be initialized but also on the complexity of solving the quadratic program $\Os(\operatorname{QP})$, which is often cubic in the number of variables $k$ \citep{goldfarb1990n}.

\section{Approximating the Archetypal Analysis++ Initialization}

The proposed initialization procedure AA++ has to solve $n \cdot (k-1)$ quadratic programs, which can be time-consuming.
Thus, we propose two strategies to approximate the initialization procedure.
The first one approximates the distance computation (circumventing the computation of quadratic programs), and the second one approximates the sampling procedure (reducing $n$).

\paragraph{Approximating the Distance Computation.}
\citet{mair2019coresets} show that the objective function of $k$-means upper bounds the objective of archetypal analysis, which is due to the per-point projections, \ie,
\begin{align*}
         \min_{\q \in \conv(\Zs)} \|\x-\q\|_2^2
\ \leq \ \min_{\q \in \Zs}        \|\x-\q\|_2^2
\end{align*}
for a set of clusters/archetypes $\Zs$.
Since the computationally most expensive operation in the proposed Algorithm~\ref{alg:AApp} is the projection onto the convex hull in line 6, it can be approximated by the distance to the closest point within the already chosen archetypes.
See Figure~\ref{fig:approx} for an example.
Note that the distance is then always over-estimated, and even points within the convex hull of the already chosen points might have a distance, although the projection should have a length of zero.
This is depicted in Figure~\ref{fig:approx} for the red point.
Following this approach boils down to the $k$-means++ initialization procedure.
Hence, the new complexity is $\Os(n \cdot k \cdot d)$, thus avoiding the cost of solving the QP.

\begin{figure}[t]
\centering
\includegraphics[width=.5\textwidth]{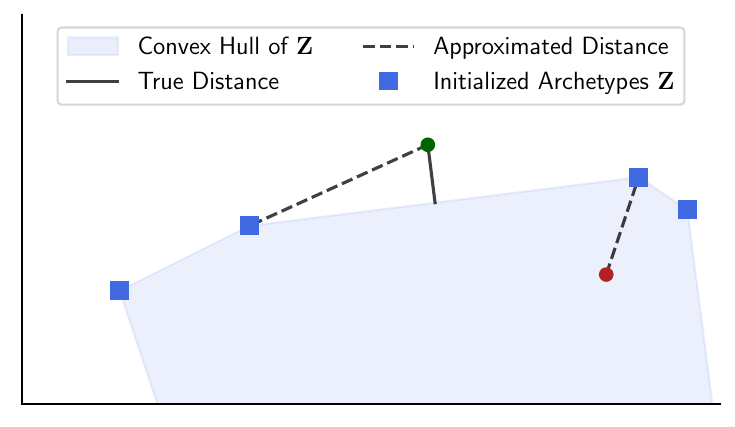}
\caption{Approximation of the distance function in two dimensions. The true distance of the green point is depicted using a solid line whereas the approximation is shown as a (larger) dashed line. The red point has no distance to the convex hull, but the approximation yields a positive distance.}
\label{fig:approx}
\end{figure}

\paragraph{Approximating the Sampling Procedure.}
Another approach is to approximate the sampling procedure by not considering all $n$ data points, which is especially beneficial in large-scale scenarios.
To initialize $k$-means++ in sublinear time, \citet{bachem2016approximate} leverage a Markov Chain Monte Carlo (MCMC) sampling procedure.
This procedure is based on the Metropolis-Hastings algorithm \citep{hastings1970monte} with an independent and uniform proposal distribution.
We adapt this idea for AA++ as follows.
The first archetype is still sampled uniformly at random.
For every following archetype, a Markov chain of length $m \ll n$ is constructed iteratively.
We begin by sampling an initial point $\x_i$.
Then, in every step of the chain, we sample a candidate point $\x_j$ and compute the acceptance probability, which is given by
\begin{align*}
\pi = \min \left ( 1, \frac{\min_{\q \in \conv(\Zs)} \|\x_j-\q\|_2^2}{\min_{\q \in \conv(\Zs)} \|\x_i-\q\|_2^2} \right ).
\end{align*}
With probability $\pi$, we update the current state from $\x_i$ to $\x_j$, otherwise we keep $\x_i$.
After $m$ steps, we add the current $\x_i$ as the next archetype to the set of initial archetypes $\Zs$.
This approach is summarized in Algorithm~\ref{alg:AAppMC}, and the complexity of this strategy is $\Os(m \cdot k \cdot \operatorname{QP})$.

\citet{bachem2016approximate} also provide a theoretical result that bounds the error in terms of the total variation distance of the approximate sampling distribution to the true sampling distribution.

\begin{algorithm}[h!]
    \caption{AA++ Monte Carlo Initialization}
    \label{alg:AAppMC}
  \begin{algorithmic}[1]
    \State {\bfseries Input:} Set of $n$ data points $\Xs$, number of archetypes $k$, chain length $m$
    \State {\bfseries Output:} Initial set of $k$ archetypes $\Zs$
    \State Sample an index $i$ uniformly at random from $[n]$, \ie, using $p(i) = n^{-1}$
    \State Append $\x_i$ to $\Zs$
    \While{$|\Zs|<k$}
      \State Sample $i$ uniformly at random from $[n]$, \ie, using $p(i) = n^{-1}$
      \State Compute the distance to the convex hull, \ie, $d_i^2 = \min_{\q \in \conv(\Zs)} \|\x_i-\q\|_2^2$
      \For{$l=2,3,\ldots,m$}
        \State Sample $j$ uniformly at random from $[n]$, \ie, using $p(j) = n^{-1}$
        \State Compute the distance to the convex hull, \ie, $d_j^2 = \min_{\q \in \conv(\Zs)} \|\x_j-\q\|_2^2$
        \State Sample $r$ uniformly at random from $(0,1)$
        \If{$d_i^2=0$ or $d_j^2 / d_i^2 > r$}
          \State Set $i=j$ and $d_i^2 = d_j^2$
        \EndIf
      \EndFor
      \State Append $\x_i$ to $\Zs$
    \EndWhile
  \end{algorithmic}
\end{algorithm}

Here, $\|p-q\|_{\operatorname{TV}}$ denotes the total variation distance between two distributions $p$ and $q$ which is defined as
\begin{align*}
\|p-q\|_{\operatorname{TV}} = \frac12 \sum_{\x\in\Omega}|p(\x)-q(\x)|,
\end{align*}
where $\Omega$ is a finite sample space on which both distributions are defined on.
The following bound on the error shows that the longer the chain length $m$, the smaller the error $\epsilon$.
\begin{theorem}[Adapted from \citet{bachem2016approximate}]
Let $k>0$ and $0<\epsilon<1$.
Let $p_{++}$ be the probability distribution over $\Zs$ defined by using AA++ (Algorithm~\ref{alg:AApp}) and $p_{\operatorname{MCMC}}$ be the probability distribution over $\Zs$ defined by using AA++MC (Algorithm~\ref{alg:AAppMC}).
Then, 
\begin{align*}
\|p_{\operatorname{MCMC}}-p_{++}\|_{\operatorname{TV}}\leq\epsilon
\end{align*}
for a chain length $m = \Os(\gamma'\log\frac k \epsilon)$, where
\begin{align*}
\gamma' = \max_{\Zs \subset \Xs, |\Zs|\leq k} \ \max_{\x\in\Xs} \ n \frac{d(\x,\Zs)^2}{\sum_{\x'\in\Xs}d(\x',\Zs)^2},
\end{align*}
and $d(\x,\Zs)^2 = \min_{\q \in \conv(\Zs)} \|\x-\q\|_2^2$.
\end{theorem}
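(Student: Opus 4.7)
The plan is to lift the Bachem et al. (2016) analysis of $k$-means MCMC initialization to the AA++ setting. Their proof relies only on the structural fact that the exact sampling distribution at each stage has the form $p(\x \mid \Zs) \propto d(\x, \Zs)^2$ for some non-negative function $d$, together with the uniform bound $n\, d(\x,\Zs)^2 / \sum_{\x'} d(\x',\Zs)^2 \leq \gamma'$; the particular form of $d$ (closest-center distance versus projection onto $\conv(\Zs)$) never enters any estimate. I would thus mirror their argument step by step, substituting the AA++ projection distance throughout.

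The argument has three parts. First, fix the set $\Zs$ of already sampled archetypes and analyze a single inner-loop execution of Algorithm~\ref{alg:AAppMC}. It is an independent Metropolis--Hastings chain with uniform proposal $q(i) = 1/n$ and target $\pi_{\Zs}(i) \propto d(\x_i, \Zs)^2$, initialized at a uniform draw. Bounding the importance ratio $\pi_{\Zs}(i)/q(i)$ by $\gamma'$ yields a Doeblin-type minorization of the kernel, $K(i,\cdot) \geq \pi_{\Zs}(\cdot)/\gamma'$, which gives the per-step contraction
\begin{align*}
\|\mu_m - \pi_{\Zs}\|_{\operatorname{TV}} \;\leq\; \left(1-\tfrac{1}{\gamma'}\right)^{m-1} \;\leq\; \exp\!\left(-\tfrac{m-1}{\gamma'}\right),
\end{align*}
where $\mu_m$ denotes the law of the state after $m-1$ transitions. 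Second, I would chain these per-step bounds across the $k-1$ outer iterations. The joint laws $p_{++}$ and $p_{\operatorname{MCMC}}$ share the uniform first marginal and identical conditional targets; they differ only in that $p_{\operatorname{MCMC}}$ replaces the exact conditional draw by $m-1$ MCMC transitions at each subsequent stage. A hybrid/coupling argument, replacing one exact conditional by its MCMC approximation at a time, yields
\begin{align*}
\|p_{\operatorname{MCMC}} - p_{++}\|_{\operatorname{TV}} \;\leq\; (k-1)\exp\!\left(-\tfrac{m-1}{\gamma'}\right).
\end{align*}
Setting the right-hand side equal to $\epsilon$ and solving gives $m = 1 + \gamma'\log\!\tfrac{k-1}{\epsilon} = \Os(\gamma'\log(k/\epsilon))$, as claimed.

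The main obstacle, though not technically deep, is verifying that the structural properties Bachem et al.\ extract from the squared distance carry over verbatim to the convex-hull projection. Only non-negativity, the fact that $\x \in \Zs$ receives zero target mass (so degenerate normalizations are handled correctly), and the explicit $\gamma'$ bound are used, and all three are immediate from the definition of $\conv(\Zs)$ together with Lemma~\ref{lemma}. The genuinely subtle step, the Doeblin minorization underlying the geometric mixing rate, is independent of the choice of $d$ and follows from standard independent-MH theory, so I do not anticipate substantive difficulty beyond a clean statement of the reduction and careful bookkeeping of the hybrid argument across the $k-1$ conditional sampling stages.
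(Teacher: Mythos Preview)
The paper does not actually prove this theorem: it is stated with attribution to \citet{bachem2016approximate} and no proof appears anywhere in the body or in Appendix~\ref{app:proofs}, which contains only the proofs of Lemma~\ref{lemma} and Proposition~\ref{prop}. The authors' implicit claim is precisely the observation you make explicit, namely that the Bachem et al.\ analysis is agnostic to the particular form of $d(\x,\Zs)^2$ and therefore transfers verbatim once one replaces closest-center distance by projection distance onto $\conv(\Zs)$. Your three-step sketch (Doeblin minorization of the independent Metropolis--Hastings kernel via the $\gamma'$ bound, geometric contraction for a single stage, then a hybrid argument across the $k-1$ stages) is exactly the Bachem et al.\ proof, so there is nothing to compare against and your proposal is correct.
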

Note that $\gamma'$ is a property of the data set.

\section{Experiments}

\paragraph{Data.}
We use the following seven real-world data sets of varying sizes and dimensionalities.
Additional eight real-world data sets, often smaller in size and dimension, are considered in Appendix~\ref{app:additional_data}.

\begin{figure}[t!]
\centering
\includegraphics[width=\textwidth]{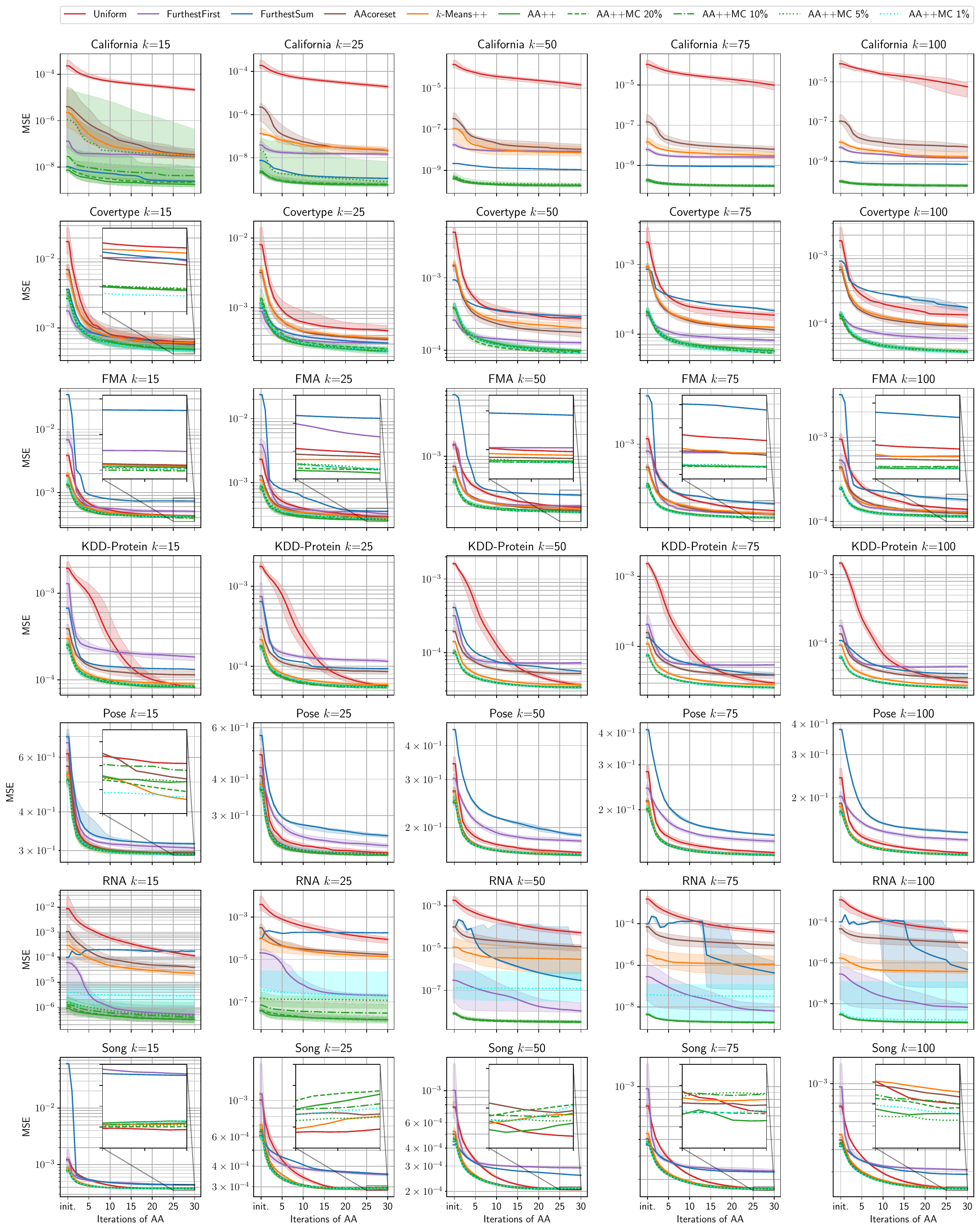}
\caption{Results on California Housing, Covertype, FMA, KDD-Protein, Pose, RNA, and Song.}
\label{fig:res_main}
\end{figure}

The \emph{California Housing} \citep{pace1997} data set has $n=20,640$ examples in $d=8$ dimensions.
A first large data set is \emph{Covertype} \citep{blackard1999comparative}, which consists of $n=581,012$ instances in $d=54$ dimensions.
\emph{FMA}\footnote{\url{https://github.com/mdeff/fma}} \citep{fma_dataset} is a data set for music analysis that considers $n=106,574$ songs represented with $d=518$ features.
It thus serves as a data set of higher dimensionality within our evaluation.
\emph{KDD-Protein}\footnote{\url{http://osmot.cs.cornell.
edu/kddcup/datasets.html}} has $n=145,751$ data points, each represented with $d=74$ dimensions measuring the match between a protein and a native sequence.
The data set \emph{Pose} is a subset of the Human3.6M data set \citep{IonescuSminchisescu11,h36m_pami}.
Pose was used in the ECCV 2018 PoseTrack Challenge and deals with 3D human pose estimation.\footnote{\url{http://vision.imar.ro/human3.6m/challenge\_open.php}} 
Each of the $n=35,832$ poses is represented as 3D coordinates of 16 joints.
Thus, the problem is $48$-dimensional.
The data set \emph{RNA} \citep{uzilov2006detection} contains $n=488,565$ RNA input sequence pairs with $d=8$ features.
Another larger data set we use is a subset of the Million Song Dataset \citep{Bertin-Mahieux2011}, which is called \emph{Song}.
It has $n=515,345$ data points in $d=90$ dimensions.
\looseness=-1

\paragraph{Data Pre-processing.}
We apply pre-processing to avoid numerical problems during learning and consider two different approaches: (i) \emph{CenterAndMaxScale}, which centers the data and then divides the data matrix by its largest element, and (ii) \emph{Standardization} which centers the data and then divides every dimension by its standard deviation.
Results on \emph{CenterAndMaxScale} are presented in the main body of the paper, while results on \emph{Standardization} are presented in Appendix~\ref{app:standardization_results} for completeness.

\paragraph{Baselines and Approximations.}
We compare our proposed initialization strategy \emph{AA++} against a \emph{Uniform} subsample of all data points, \emph{FurthestFirst} \citep{gonzalez1985clustering,hochbaum1985best}, \mbox{\emph{FurthestSum}} \citep{morup2010archetypal,morup2012archetypal}, and \emph{AAcoreset} \citep{mair2019coresets}.
In addition, we evaluate our two proposed approximations of AA++.
The first approximation, which is equivalent to \emph{$k$-means++}, should thus not be seen as a baseline.
Note that by $k$-means++ we only refer to the initialization strategy, \ie, we do not run $k$-means afterwards. 
The second approximation uses a Markov Chain and is called \emph{AA++MC}.
We evaluate 1\%, 5\%, 10\%, and 20\% of the data set sizes as chain lengths $m$.
\looseness=-1

\paragraph{Setup.}
For various numbers of archetypes $k\in\{15,25,50,75,100\}$, we initialize archetypal \mbox{analysis} according to each of the baseline strategies and compute the Mean Squared Error (MSE), \ie, the objective in Equation~\eqref{eq:sum_proj} normalized by $n^{-1}$.
In addition, we perform a fixed number of 30 iterations of archetypal analysis based on those initializations.
Here, we use the vanilla version of archetypal analysis according to \citet{cutler1994}.
We compute statistics over 30 seeds, except for larger data sets ($n>500,000$ or $d>500$) for which we only compute 15 seeds and we report on median performances and depict the 75\% and 25\% quantiles.
Furthermore, we track the time it takes to initialize the archetypes as well as the time each iteration of archetypal analysis takes.
Details on the implementation are provided in Appendix~\ref{app:implementation_details}.
All experiments run on an Intel Xeon machine with 28 cores with 2.60~GHz and 256~GB of memory.
\looseness=-1

\begin{figure}[t]
\centering
\includegraphics[width=\textwidth]{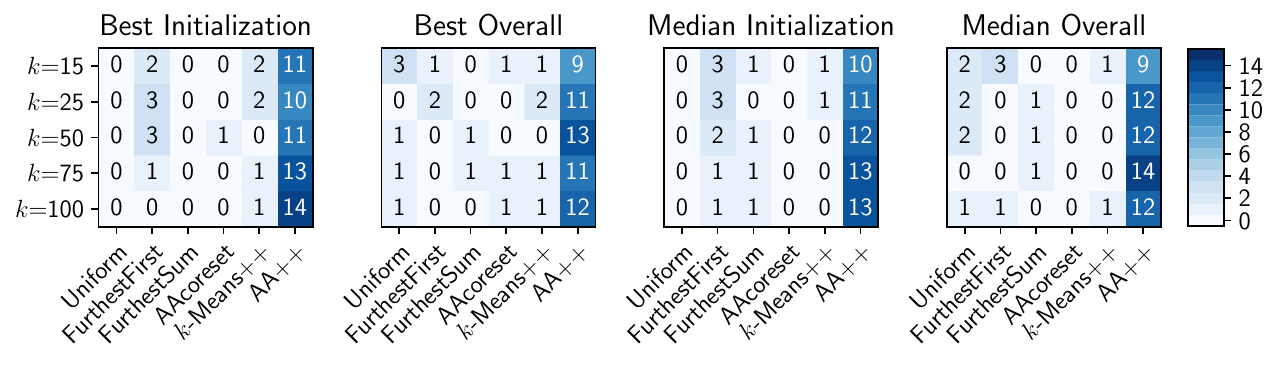}
\caption{Aggregated statistics over 15 data sets (seven data sets from above and eight data sets from the appendix). Each table shows how often each initialization method yields the best result for various choices of $k$ under different settings. Best refers to the lowest error of a single seed and median refers to the median over many seeds. We report on the performance after initialization and overall during the optimization.\looseness=-1}
\label{fig:table}
\end{figure}

\paragraph{Performance Results.}
Figure~\ref{fig:res_main} depicts the results on California Housing, Covertype, FMA, KDD-Protein, Pose, RNA, and Song.
Although being the most commonly used initialization methods, Uniform (red line) and FurthestSum (blue line) often yield the worst initializations, which can be seen by a short straight line on the left-hand side.
As expected, the error decreases during the first 30 iterations of archetypal analysis.
However, there is frequently a significant performance gap between Uniform when compared to AA++ (green line), especially for California Housing, Covertype, FMA, and RNA.
The same gap is visible for FurthestSum on all data sets.
The distance approximation - which is equivalent to $k$-means++ (orange line) - sometimes performs similarly to AA++ but is almost always better than Uniform.
As for the Monte Carlo approximations of AA++ using 1\%, 5\%, 10\%, and 20\% of the data set size as chain lengths $m$, we can see that they approximate AA++ sufficiently well on these data sets, except on California Housing and RNA, where the smallest chain length struggles.
Overall, our proposed AA++ initialization performs best in almost all cases.
\looseness=-1

This is confirmed by the aggregated statistics depicted in Figure~\ref{fig:table}.
Each table shows how often each initialization method (MCMC approximations excluded) performs best on all evaluated data sets.
In total, we consider 15 data sets: all previously introduced data sets and eight data sets, which are discussed in the appendix.
Thus, 15 is the highest and best number that can appear in each of those tables.
Within Figure~\ref{fig:table}, \emph{best} refers to the lowest error among all seeds, and \emph{median} refers to the lowest median over all seeds per method.
Besides, \emph{initialization} only considers the performance after initialization, whereas \emph{overall} reflects the best performance across all 30 iterations of AA, which is typically at the last iteration.
Once again, AA++ clearly yields the best results in all scenarios, especially for larger values of $k$.
Note that we show $k$-means++ separately for completeness, but it should be seen as an approximation of AA++.
For completeness, we also report on aggregated statistics on worst performances in Figure~\ref{fig:table_worst}, which can be found in the appendix.
\looseness=-1

\begin{figure}[t]
\centering
\includegraphics[width=.85\textwidth]{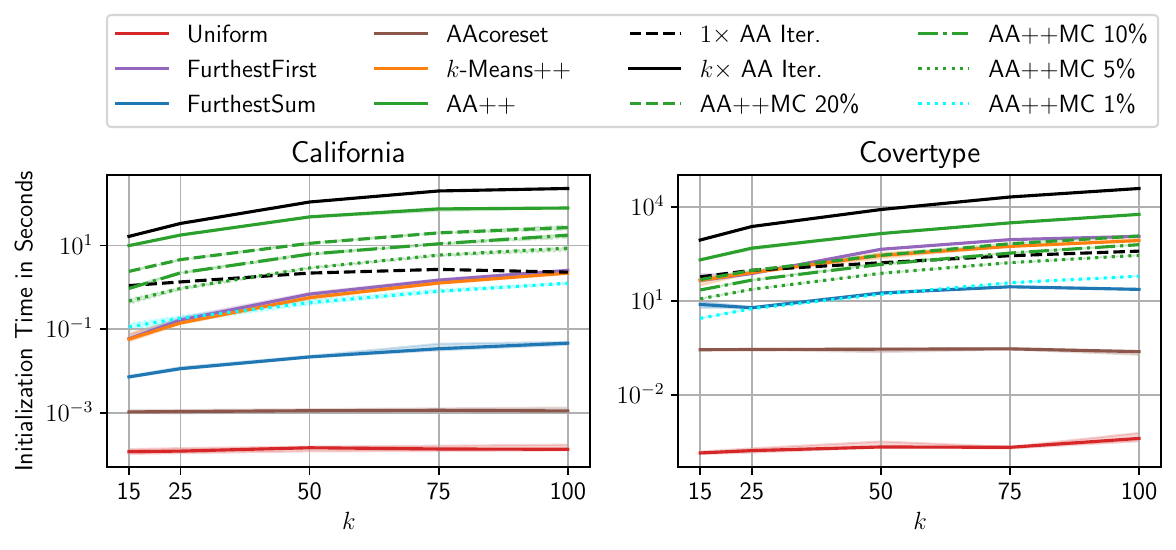}
\caption{The median time it takes to initialize archetypal analysis.}
\label{fig:time}
\end{figure}

\paragraph{Timing Results.}
The increase in performance comes at a cost.
Figure~\ref{fig:time} depicts the time needed to initialize the archetypes on the California Housing and Covertype data sets for various choices of~$k$.
As expected, the fastest-performing method is Uniform since it uses no information about the data except the number of data points.
AAcoreset is slightly slower as it needs to compute the mean of the data as well as all distances to the mean.
FurthestSum is slower than Uniform and AAcoreset, and the proposed approach AA++ is consistently the slowest.
However, note that initializing $k$ points is still much cheaper than running $k$ AA iterations (black line).
Using the distance approximation - which is equivalent to $k$-means++ - is much faster than AA++ and takes approximately as much time to initialize as FurthestFirst.
Using the MCMC-based approximation also drastically reduces the initialization time of AA++.
On Covertype, AA++MC using 1\% of the data points as the chain length $m$ takes approximately as much time as FurthestSum, and other chain lengths $m$ lower the initialization time to the time of $k$-means++.
\looseness=-1

\paragraph{Influence of the Pre-processing Scheme.}
A surprising finding is that FurthestSum is especially sensitive to the pre-processing scheme.
Standardization clearly often degrades the performance of FurthestSum but does not affect AA++ and its approximations as we show in Appendix~\ref{app:standardization_results}.

\section{Discussion \& Limitations}\label{sec:discussion}

The idea of archetypal analysis++ is closely related to $k$-means++.
Thus, it is natural to expect that a similarly strong theoretical guarantee, \ie, $\mathbb{E}[\phi_{k\text{-means++}}] \leq \mathcal{O}(\log k)\phi_\text{OPT}$, can be derived for AA++ as for $k$-means++.
Lloyds algorithm \citep{lloyd1982least} consists of two rather simple steps that are iterated until convergence.
First, within the assignment step, the distances between data points and cluster centers are computed, and each data point is assigned to its closest cluster center.
Second, per cluster, the cluster centers are updated by computing the mean of all points assigned to this cluster.
Note that those computations are relatively simple, and the second step can be performed in closed-form.
Due to this simplicity, many theoretical analyses on $k$-means (including $k$-means++) exist.
In contrast to that, archetypal analysis is a more challenging problem, not only computationally due to the optimization problems that do not admit a closed-form solution (\emph{cf.} Algorithm~\ref{alg:AA}), but also theoretically to analyze.
For $k$-means, \cite{arthur2007kmeans} exploit having closed-form solutions for the optimal cluster centers, and that only points in a cluster influence the position of their cluster center.
This is not possible for archetypal analysis since there are no closed-form solutions, and there is an interdependence between archetypes since we are constructing a convex hull.
Thus, we believe a similar strategy to derive theoretical guarantees for AA++ is unfruitful.
However, note that neither of the state-of-the-art initializations for archetypal analysis come with any theoretical guarantees.
\looseness=-1

Undoubtedly, data-independent initialization strategies like Uniform -- that chooses $k$ points from the data set uniformly at random -- are the fastest way to initialize a model.
Any data-dependent strategy will be more costly in terms of initialization time.
One drawback of the proposed AA++ initialization strategy is that it is computationally more involved than its baselines.
More specifically, it has to solve $n$ quadratic program per archetype to be initialized except the first one.
This renders the initialization of $k$ archetypes typically to be more expensive -- in terms of time -- than a single iteration of archetypal analysis but faster than $k$ iterations, \emph{cf.} Figure~\ref{fig:time}.
As a remedy, we proposed two approximations of AA++, which both decrease the initialization times to those of $k$-means++ or a single iteration of archetypal analysis.
Thus, allowing for a practical usage of our proposed initialization strategy.

We might expect that the total runtime (30 iterations of archetypal analysis) is mainly determined by the different initialization efforts, \ie, archetypal analysis is being the slowest when initialized using AA++.
However, this is not true.
As seen in, \eg, Figure~\ref{fig:res_main_timing}, the curves start indeed at different times, and AA++ is usually the most computationally demeaning initialization approach.
Nonetheless, the total time of AA++ is not always the slowest, see, \eg, Covertype in Figure~\ref{fig:res_main_timing}.
Note that the figure depicts the performance after initialization, followed by 30 interpolated points that correspond to the performances after 30~iterations of archetypal analysis.
Per iteration, lines 5-8 in Algorithm~\ref{alg:AA} are computed. Specifically, line~5 involves solving $n$ non-negative least-squares problems to obtain the mixture matrix $\A$; line 6 computes the archetypes $\Z$ given $\A$ and the data matrix $\X$ by solving a system of linear equations; line 7 involves solving $k$ non-negative least-squares problems to obtain the mixture matrix $\B$; and line 8 computes the archetypes~$\Z$ given $\B$ and $\X$ by computing a matrix-matrix product.
There are several factors that determine the runtime of each and every iteration.
First, to solve the non-negative least-squares problems, we utilize NNLS \citep{lawson1995}, which is an active set method that, depending on the specific inputs, needs a different number of internal iterations, and thus, the runtime of every non-negative least-squares problem varies.
Second, instead of solving a system of linear equations in line 6, we solve a least-squares problem via \texttt{np.linalg.lstsq} for cases in which $\A^\transpose \A$ is not full-rank.
Both factors can influence the runtime of an iteration of archetypal analysis.
\looseness=-1

Due to the bi-convex loss and the alternating optimization scheme, we expect the loss to decrease in every iteration.
However, on rare occasions, \eg, RNA and Song in Figure~\ref{fig:res_main}, the loss slightly increases.
We conjecture that those cases are related to numerical instabilities.
For example, instead of solving a system of linear equations in line 6 of Algorithm~\ref{alg:AA}, we solve a least-squares problem via \texttt{np.linalg.lstsq} for cases in which $\A^\transpose \A$ is not full-rank.
This might have an influence on the positioning of the archetypes~$\Z$.
\looseness=-1

\section{Related Work}

Other variants exist besides the classical archetypal analysis \citep{cutler1994}.
Moving archetypes \citep{cutler1997moving} define AA for moving targets.
There are adaptations of archetypal analysis for missing data \citep{epifanio2020archetypal} and interval data \citep{d2012interval}, and probabilistic archetypal analysis \citep{seth2016probabilistic} rephrases the factorization problem in a probabilistic way.
\citet{de2019near} relax the convexity constraints of AA.
More recently, approaches based on deep learning have been considered \cite{keller2019deep,van2019finding,keller2021learning}.
For those, the initialization of archetypes is irrelevant, as considered in this paper.
Furthermore, archetypoid analysis \citep{vinue2015archetypoids} restricts the archetypes to be data points instead of convex combinations of data points.
Hence, the idea is similar to $k$-medoids \citep{kaufman1990partitioning} and intends to add even more interpretability.
Note that our proposed AA++ can also be used as an initialization for archetypoid analysis.

\citet{suleman2017ill} stresses that an improper initialization of archetypal analysis is problematic and that the FurthestSum method is prone to selecting redundant archetypes, especially when having many archetypes.
\citet{nascimento2019unsupervised} consider an anomalous pattern initialization algorithm for initializing archetypal analysis.
However, their focus was on inferring the number of archetypes $k$ and then using archetypal analysis for fuzzy clustering. 
\looseness=-1

Less common initialization strategies for archetypal analysis include running $k$-means clustering, as used by \citet{han2022probabilistic}, and utilizing a coreset \citep{mair2019coresets}, as used by \citet{black2022archetypal} and \citet{chapman2022large}.
Note that the coreset was proposed as a way to condense the data set into a smaller set for a more efficient training of archetypal analysis rather than as a way for initializing it.
However, we still include it as a baseline for completeness in our experimental evaluation.
As for $k$-means, we drew inspiration from the \mbox{$k$-means++} strategy but do not consider initialization strategies for $k$-means as suitable baselines for archetypal analysis.
We rather focus on initialization strategies that are frequently used for archetypal analysis.
\looseness=-1

For the related non-negative matrix factorization (NMF) \citep{lee1999learning} problem, several initialization techniques based on randomization, other low-rank decompositions, clusterings, heuristics, and even learned approaches \citep{sjolund2022graph} are used.
A summary of various NMF initialization methods is provided by \citet{esposito2021review}.
Among the considered baselines in this paper, all are randomized.
However, FurthestSum can be seen as a heuristic algorithm, and FurthestFirst and $k$-means++ can be classified as (initializations of) clustering algorithms.
The proposed AA++ approach is randomized just as Uniform, with the difference that the selection probabilities in AA++ are data-dependent while Uniform is data-independent.
\looseness=-1

\section{Conclusion}

We introduced archetypal analysis++ (AA++) as a new initialization method for archetypal analysis that is inspired by $k$-means++.
The proposed method does not have any hyperparameters and is straightforward to implement, as it allows the reuse of already existing subroutines of any implementation of archetypal analysis.
Furthermore, we proposed two approximations of AA++ that are faster to compute.
The first one approximates the distance computation and is equivalent to the $k$-means++ initialization method.
The second one is an MCMC approximation of the sampling procedure of AA++, which reduces the number of quadratic problems that have to be computed.
We showed that both approximations work reasonably well and lower the initialization times to competitive levels.
Empirically, we verified that, for two pre-processing schemes, AA++ almost always outperforms all baselines, including the most frequently used ones, namely Uniform and FurthestSum, on 15~real-world data sets of varying sizes and dimensionalities.

\subsubsection*{Acknowledgments}
This work was partially supported by the Wallenberg AI, Autonomous Systems and Software Program (WASP) funded by the Knut and Alice Wallenberg Foundation.
The authors thank the anonymous reviewers, Samuel G. Fadel, \Ahcene Boubekki, Paul Häusner, and Zheng Zhao for valuable comments on earlier drafts.
\looseness=-1

\bibliography{references}
\bibliographystyle{tmlr}

\appendix

\newpage

\section{Proofs}\label{app:proofs}
\subsection{Proof of Lemma~\ref{lemma}}

\AAppLemma*

\begin{proof}
Let $\Ps = \conv(\Zs)$ be the polytope corresponding to the convex hull of the archetypes.
There are only three cases.
In the first case, the added point $\x$ lies within $\Ps$.
Then, $\Ps$ remains unchanged, and so do the projections of the points outside of $\Ps$.
Hence, the value of the objective function remains unchanged.
However, AA++ (Algorithm~1) assigns a probability of zero to such a point~$\x$.
Due to $\| \X - \A \Z \|_{\operatorname{F}}^2 > 0$, there has to be an $\x\in\Xs$ that has a projection error and thus reduces the objective function when picked.
In the second case, the added point $\x$ lies outside of $\Ps$, and $\x$ is the only point projected on its face of $\Ps$.
Then, the objective function decreases exactly by the projection of $\x$ onto $\Ps$.
In the third case, the added point $\x$ lies outside of $\Ps$ and there are other points that lie on the same face of $\Ps$ as $\x$.
Then, adding $\x$ decreases the objective function by the projection of $\x$ itself and also the projection of the other points that lie on the same face.
Overall, using AA++ (Algorithm~1) decreases the objective function with probability one.
\looseness=-1
\end{proof}

\subsection{Proof of Proposition~\ref{prop}}

\AAppProp*

Within the proof we use Chebyshev's sum inequality which states that if $p_1 \geq p_2 \geq \dots \geq p_n$ and $r_1 \geq r_2 \geq \dots \geq r_n$, then $\frac1n \sum_{i=1}^n p_i r_i \geq \left(\frac1n\sum_{i=1}^n p_i\right)\left(\frac1n\sum_{i=1}^n r_i\right)$.

\begin{proof}[Proof of Proposition~\ref{prop}]
\textit{Idea:}
The cost that every point $\x_i$ contributes to the objective function is $d(\x_i,\Zs) = \min_{\q \in \conv(\Zs)} \| \x_i - \q \|_2^2$.
Hence, by adding $\x_i$ to $\Zs$, the new objective function is reduced by at least $d(\x_i,\Zs)$.
If $\x_i$ is the only point on its face of $\Zs$, the reduction only involves $\x_i$ and hence the reduction is exactly $d(\x_i,\Zs)$.
If there are more points on the same face as $\x_i$, choosing $\x_i$ reduces not only $d(\x_i,\Zs)$ but also the projection of any other point on that face.
Hence, $d(\x_i,\Zs)$ serves as a lower bound on the reduction of the objective function when choosing $\x_i$.
We now show that the reduction for AA++ is larger in expectation than for the uniform initialization.

Define $C = \sum_{i=1}^n d(\x_i,\Zs)$ which is equal to the normalization term within AA++.
Without loss of generality, we can reorder the points such that $d(\x_1,\Zs) \geq d(\x_2,\Zs) \geq \cdots \geq d(\x_n,\Zs)$.
Let $r_i = d(\x_i,\Zs)$ be the cost (potential reduction) the point $\x_i$ contributes to the objective function and $p_i = r_i/C$ be the probability of choosing the point $\x_i$ according to AA++.
Thus, $r_1 \geq r_2 \geq \dots \geq r_n$ and $p_1 \geq p_2 \geq \dots \geq p_n$.
Furthermore, let $u_i = \frac1n$ be the probability of choosing the point according to the uniform initialization.
Due to Chebyshev's sum inequality, we obtain
\begin{align*}
\frac1n \sum_{i=1}^n p_i r_i \geq \left(\frac1n\sum_{i=1}^n p_i\right)\left(\frac1n\sum_{i=1}^n r_i\right)  
\iff \sum_{i=1}^n p_i r_i \geq \sum_{i=1}^n u_i r_i 
\iff \mathbb{E}_p[d(\x_i,\Zs)] \geq \mathbb{E}_u[d(\x_i,\Zs)],
\end{align*}
by using $\sum_{i=1}^n p_i = 1$ and by multiplying both sides by $n$.
Hence, choosing $\x_i$ according to AA++ (using the distribution $p$) results in a reduction which is at least as large as for a uniform initialization (using the distribution $u$).
Equality is met, when $r_1=r_2=\cdots=r_n$, hence $p_1=p_2=\cdots=p_n$, and ultimately $p_i=u_i$ for all $i \in [n]$.
If any point has a larger distance than the others, we have $r_1 > r_2$ due to the ordering and hence $p_1 > p_2$ in which the equality no longer holds and the reduction of the objective function is strictly larger for AA++ than for a uniform initialization in expectation.
Note that due to the construction of $r_i$ and $p_i$, we can never have the case of flipping the inequality.
In other words, AA++ cannot reduce the objective function less than that of a uniform initialization in expectation.
\end{proof}

\section{Algorithms}\label{app:alg}

Within the main body of the paper, we omitted algorithmic pseudocodes whenever the procedure is not as important or clear from the text.
However, we now provide those algorithmic descriptions for completeness.
Algorithm~\ref{alg:kmeanspp} outlines the $k$-means++ initialization procedure, and Algorithm~\ref{alg:AA} shows the standard alternating optimization scheme of archetypal analysis according to \citet{cutler1994}.
\looseness=-1

\begin{algorithm}[h]
  \caption{$k$-means++ Initialization}
  \label{alg:kmeanspp}
  \begin{algorithmic}[1]
    \State {\bfseries Input:} Set of $n$ data points $\Xs$, number of clusters $k$
    \State {\bfseries Output:} Initial set of $k$ clusters centers $\Zs$
    \State Sample index $i$ uniformly at random from $[n]$, \ie, using $p(i) = n^{-1}$
    \State Append $\x_i$ to $\Zs$
    \While{$|\Cs|<k$}
      \State Sample $i$ using $p(i) \propto \min_{\z \in \Zs} \|\x_i-\z\|_2^2$ 
      \State Append $\x_i$ to $\Zs$
    \EndWhile
  \end{algorithmic}
\end{algorithm}

\begin{algorithm}[h]
    \caption{Archetypal Analysis \citep{cutler1994}}
    \label{alg:AA}
\begin{algorithmic}[1]
  \State {\bfseries Input:} data matrix $\X \in \R^{n \times d}$, number of archetypes $k$
  \State {\bfseries Output:} weight matrices $\A \in \R^{n \times k}$ and $\B \in \R^{k \times n}$ and archetypes $\Z \in \R^{k \times d}$
  \State Initialization of the archetypes $\Z \in \R^{k \times d}$, \eg, via AA++
  \While{not converged}
    \State $\ba_i = \underset{ \ba_i^\transpose \mathbf{1} = 1, \ \ba_i \geq 0 }{ \argmin } \  \| \Z^\transpose \ba_i - \mathbf{x}_i \|_2^2$ \quad $\forall i\in[n]$
    \State $\Z = \underset{\Z}{\operatorname{solve}} \ \A^\transpose \A \Z = \A^\transpose \X$
    \State $\bb_j = \underset{ \bb_j^\transpose \mathbf{1} = 1, \ \bb_{j} \geq 0 }{ \argmin } \  \| \X^\transpose \bb_j - \mathbf{z}_j \|_2^2$ \quad $\forall j\in[k]$
    \State $\Z = \B \X$ 
  \EndWhile
\end{algorithmic}
\end{algorithm}

\section{Implementation Details}\label{app:implementation_details}

The code is implemented in Python using numpy \citep{harris2020array} and is publicly available at \url{https://github.com/smair/archetypalanalysis-initialization}.
\looseness=-1

For the optimization problem within AA++ and archetypal analysis, \ie, line~6 in Algorithm~\ref{alg:AApp} and lines~5 and 7 in Algorithm~\ref{alg:AA}, respectively, any QP solver can be used.
In our implementation, we utilize the non-negative least-squares (NNLS) method from \citet{lawson1995} and enforce the summation constraint by adding another equation in the linear system.
For example, consider line 5 in Algorithm~\ref{alg:AA}.
There, we add a row of ones to $\Z^\transpose$ and a one to $\x_i$, \ie, $1^\top \ba = 1 \cdot a_1+1 \cdot a_2+\ldots+1 \cdot a_k=1$ to ensure that the vector $\ba$ sums up to one.
Specifically, we scale the ones by $M=1000$ to have $1000 \cdot a_1+1000 \cdot a_2+\ldots+1000 \cdot a_k=1000$ such that the other lines of the linear system $\Z^\transpose \ba_i = \x_i$ do not dominate the \emph{summation to one} constraint.
Note that using NNLS with a large constant $M$ was already suggested by \citet{cutler1994}.
Specifically, we use the Fortran implementation of NNLS from scipy\footnote{\url{https://scipy.org/}}, \ie, \texttt{scipy.optimize.nnls}, and slightly adapt it by increasing the hard coded maximum number of iterations since NNLS occasionally ran out of iterations.

\begin{table}[h]
\center
\caption{An overview of the 15 data sets used in this paper. The upper part is considered in the main body of the paper and the lower part is discussed in the appendix.}
\label{tab:data}
\begin{tabular}{llrr} 
\toprule
& Data Set Name & Number of Data Points & Number of Dimensions \\
\midrule 
\multirow{7}{*}{\rotatebox[origin=c]{90}{Main Paper}}
& California Housing & 20,640 & 8 \\
& Covertype & 581,012 & 54 \\
& KDD-Protein & 145,751 & 74 \\
& Pose & 35,832 & 48 \\
& RNA & 488,565 & 8 \\
& Song & 515,345 & 90 \\
& FMA & 106,574 & 518 \\ 
\midrule 
\multirow{8}{*}{\rotatebox[origin=c]{90}{Appendix}}
& Airfoil & 1,503 & 5 \\
& Concrete & 1,030 & 8 \\
& Banking1 & 4,971 & 7 \\
& Banking2 & 12,456 & 8 \\
& Banking3 & 19,939 & 11 \\
& Ijcnn1 & 49,990 & 22 \\
& MiniBooNE & 130,064 & 50 \\
& SUN Attribute & 14,340 & 102 \\
\bottomrule
\end{tabular}
\end{table}

\begin{figure}[h!]
\centering
\includegraphics[width=\textwidth]{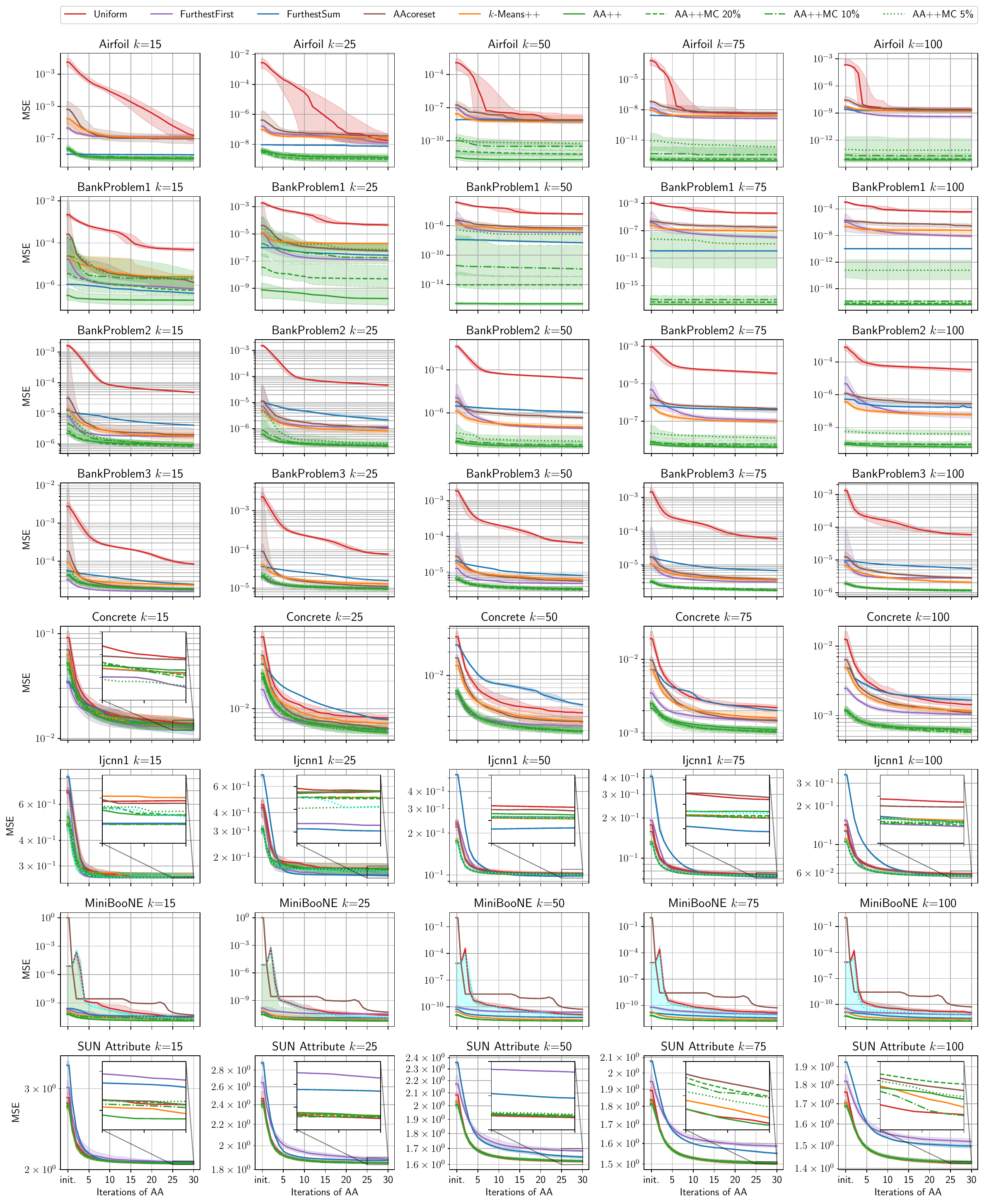}
\caption{Results on Airfoil, Concrete, Banking1, Banking2, Banking3, Ijcnn1, MiniBooNE, and SUN Attribute using the \emph{CenterAndMaxScale} pre-processing as in the main body of the paper.}
\label{fig:res_appendix}
\end{figure}

\section{Results on Additional Data Sets}\label{app:additional_data}

We further evaluate the initialization methods on the following eight data sets.
\emph{Airfoil} \citep{brooks1989} has $n=1,503$ data points represented in $d=5$ dimensions.
\emph{Concrete} \citep{yeh1998} has $n=1,030$ instances in $d=8$ dimensions.
The data sets \emph{Banking1}, \emph{Banking2}, and \emph{Banking3} \citep{dula2012} have $n=4,971$, $n=12,456$, and $n=19,939$ points in $d=7$, $d=8$, and $d=11$ dimensions, respectively. 
The \emph{Ijcnn1} data set has $n=49,990$ points in $d=22$ dimensions and was used in the IJCNN 2001 neural network competition.\footnote{\url{https://www.csie.ntu.edu.tw/~cjlin/libsvmtools/datasets/}}
We employ the same pre-processing as \citet{chang2001ijcnn}.
\emph{MiniBooNE} \citep{dua2019uci} consists of $n=130,064$ data points in $d=50$ dimensions.
Finally, the \emph{SUN Attribute} \citep{patterson2012sun} has $n=14,340$ data points in $d=102$ dimensions.
A summary of all used data sets is provided in Table~\ref{tab:data}.
\looseness=-1

Figure~\ref{fig:res_appendix} depicts the performance for the additional eight data sets.
Note that we omit AA++MC 1\% for small data sets, \ie, if $n<25,000$. 
Again, we can see that Uniform and the FurthestSum initializations often perform worse than the proposed approach and its approximation.
AA++ is almost always consistently best, except on some rare occasions, such as for $k=25$ on Ijcnn1.
While the MCMC approximations of AA++ often perform very close to AA++ itself, we can see some more significant performance gaps on some of these additional data sets.
Especially on BankProblem1, most Monte Carlo versions fail to approximate AA++ properly.
However, note that all approximations are still better than the Uniform baseline.

Figure~\ref{fig:table_worst} summarizes aggregated statistics over 15 data sets, similar to Figure~\ref{fig:table}.
The difference is that we now analyze the worst instead of the best-case setting.
After initialization, the most frequently used initialization strategies for archetypal analysis, namely Uniform and FurthestSum, often yield the highest errors and, thus, the worst initializations.
Overall, they sometimes recover from the sub-optimal initializations but still perform worse more often than the proposed AA++ strategy.

\section{Timing Results}\label{app:timing_results}

The results depicted in Figures~\ref{fig:res_main} and \ref{fig:res_appendix} show the MSE as a function of iterations.
However, as shown in Figure~\ref{fig:time}, the initialization times vary across all initialization strategies, and we should not assume that the time of an AA iteration is static, as discussed in Section~\ref{sec:discussion}.
Thus, we also report on the MSE as a function of time in Figures~\ref{fig:res_main_timing}~and~\ref{fig:res_appendix_timing}.
Surprisingly, although AA++ takes longer to initialize, the overall optimization time for 30 AA iterations is smaller for AA++ than for Uniform on Covertype, Concrete, and MiniBooNE data.
A similar scenario can be seen for small $k$ values on Song data.
On some other data sets, using AA++ takes more overall, mainly due to the slower initialization.
However, in turn, we get better performance.
Furthermore, using an approximation of AA++ yields a reasonable compromise time-wise while still being close to AA++ performance-wise.
\looseness=-1

\begin{figure}[t]
\centering
\includegraphics[width=0.9\textwidth]{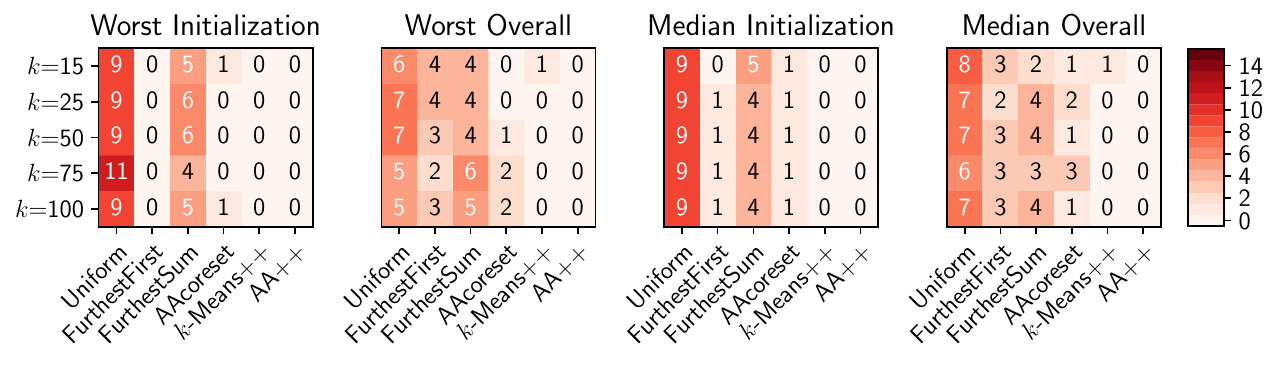}
\caption{Aggregated statistics over 15 data sets (seven data sets from above and eight data sets from the appendix) using \emph{CenterAndMaxScale} as pre-processing. Each table shows how often each initialization method yields the worst result for various choices of $k$ under different settings. Worst refers to the highest error of a single seed and median refers to the median over many seeds. We report on the performance after initialization and overall during the optimization.\looseness=-1}
\label{fig:table_worst}
\end{figure}

\section{Pre-processing}\label{app:preprocessing}

In the main body of the paper, we pre-processed the data by first centering the data set and then dividing it by the maximum value.
Another frequently used pre-processing scheme is standardization.
That is, per dimension, we subtract the mean and divide it by the standard deviation.
Since both are linear transformations, they do not change the membership of the points being on the border of the convex hull \citep{ziegler2012lectures}.
However, the former scheme maintains the shape of the data set in terms of its convex hull, whereas the latter scheme changes it.
This is depicted in Figure~\ref{fig:pre}, where the original data is shown in the middle, the pre-processing of the main body of the paper is shown on the left-hand side and a data standardization is shown on the right-hand side.

\begin{figure}[t]
\centering
\includegraphics[width=0.9\textwidth]{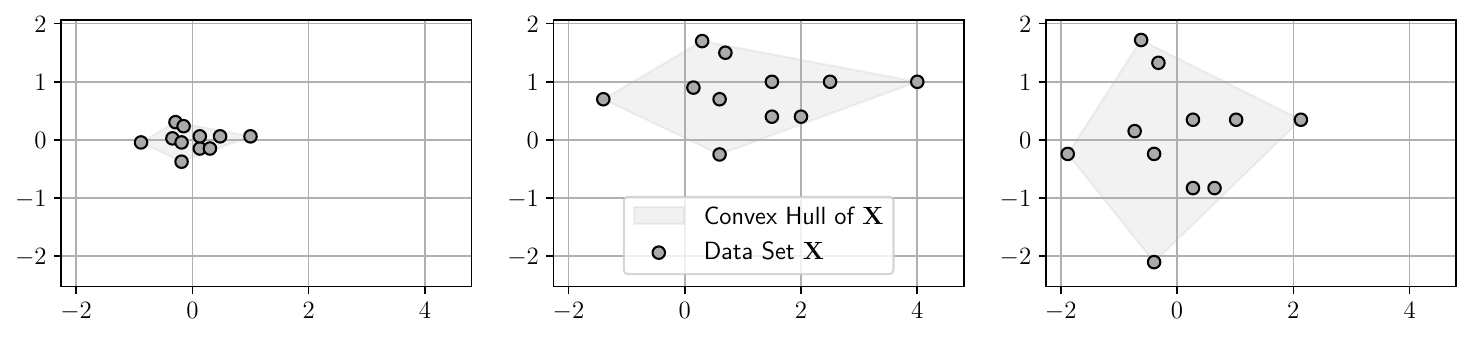}
\caption{Comparison between the two pre-processing approaches. Left: Center data and divide by maximum value. Middle: Original data set. Right: Standardized data set.}
\label{fig:pre}
\end{figure}

\section{Results on Standardized Data Sets}\label{app:standardization_results}

We also conduct the same set of experiments on all data sets using standardization as pre-processing.
In Figure~\ref{fig:res_main_standardize}, we can see for the first set of data sets that FurthestSum usually performs worst, often by a large margin.
In contrast, the most consistent behavior has the proposed AA++ method, which is often the best.
Besides, the proposed approximations of AA++ perform sufficiently close to AA++ itself.
Figure~\ref{fig:res_main_standardize_timing} shows the same results but with respect to time instead of iterations.
The performance of the second set of data sets is depicted in Figure~\ref{fig:res_appendix_standardize} and the version showing time is provided in Figure~\ref{fig:res_appendix_standardize_timing}.

Aggregated statistics on the results using standardization as pre-processing are depicted in Figure~\ref{fig:table_standardize}.
As expected, AA++ wins on most of the data sets irrespective of the applied setting.
However, the numbers are slightly lower than in Figure~\ref{fig:table}, which summarizes the results using the CenterAndMaxScale pre-processing.

\begin{figure}[H]
\centering
\includegraphics[width=\textwidth]{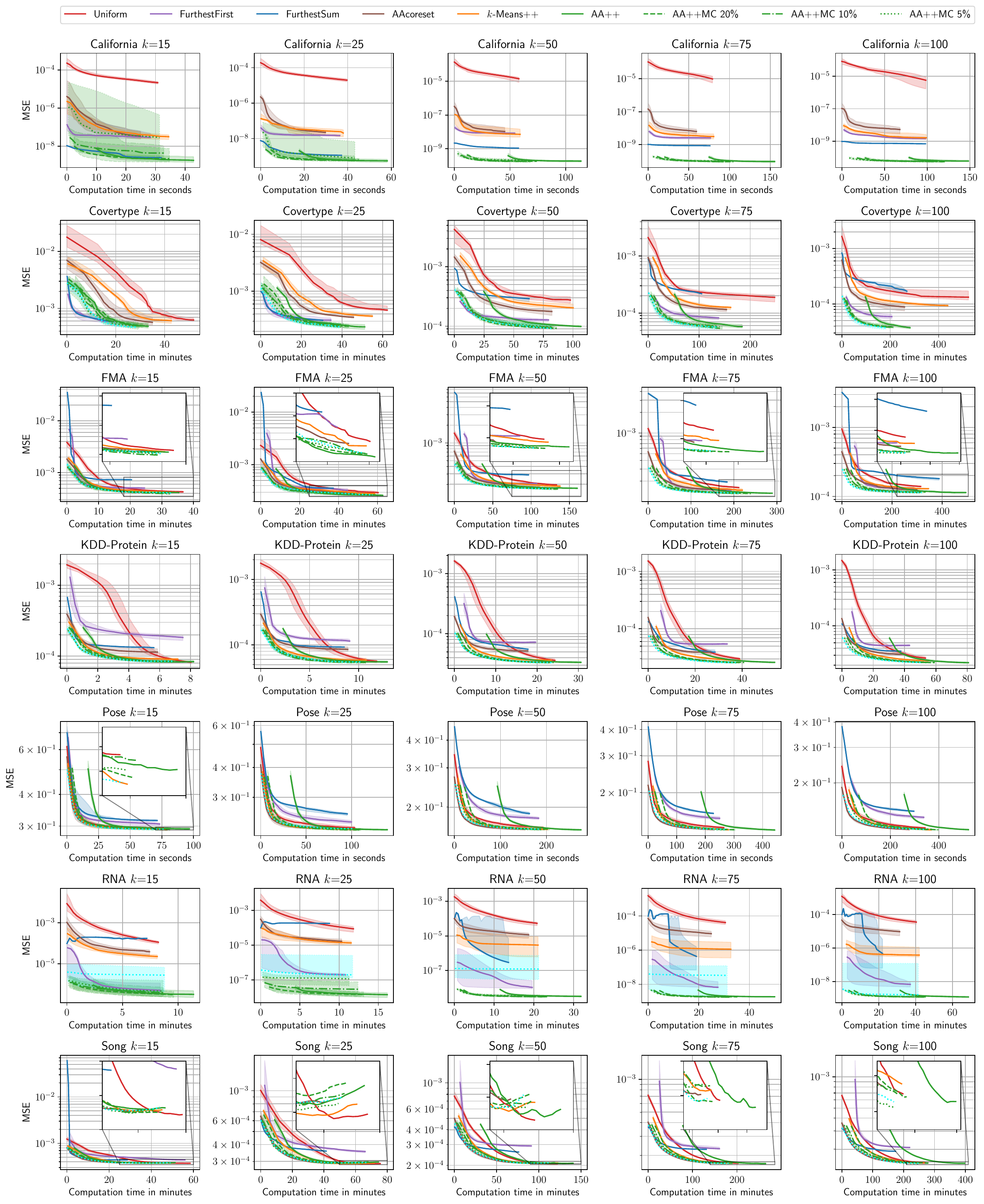}
\caption{Results on California Housing, Covertype, FMA, KDD-Protein, Pose, RNA, and Song. In contrast to Figure~\ref{fig:res_main}, this figure shows the computation time of the initialization followed by 30 iterations of archetypal analysis on the $x$-axis.}
\label{fig:res_main_timing}
\end{figure}

\begin{figure}[H]
\centering
\includegraphics[width=\textwidth]{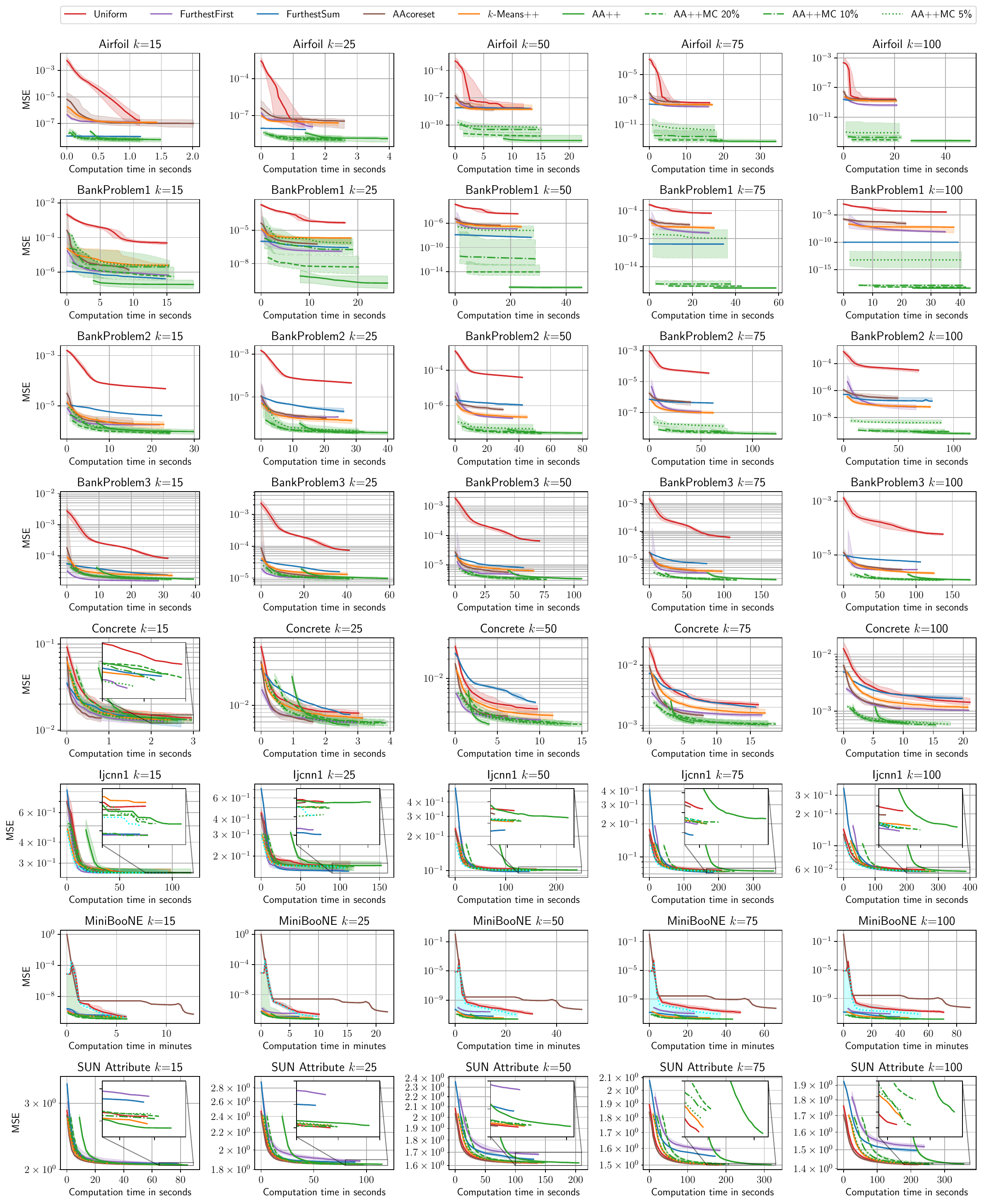}
\caption{Results on Airfoil, Concrete, Banking1, Banking2, Banking3, Ijcnn1, MiniBooNE, and SUN Attribute using the \emph{CenterAndMaxScale} pre-processing as in the main body of the paper. In contrast to Figure~\ref{fig:res_appendix}, this figure shows the computation time of the initialization followed by 30 iterations of archetypal analysis on the $x$-axis.}
\label{fig:res_appendix_timing}
\end{figure}

\begin{figure}[t]
\centering
\includegraphics[width=\textwidth]{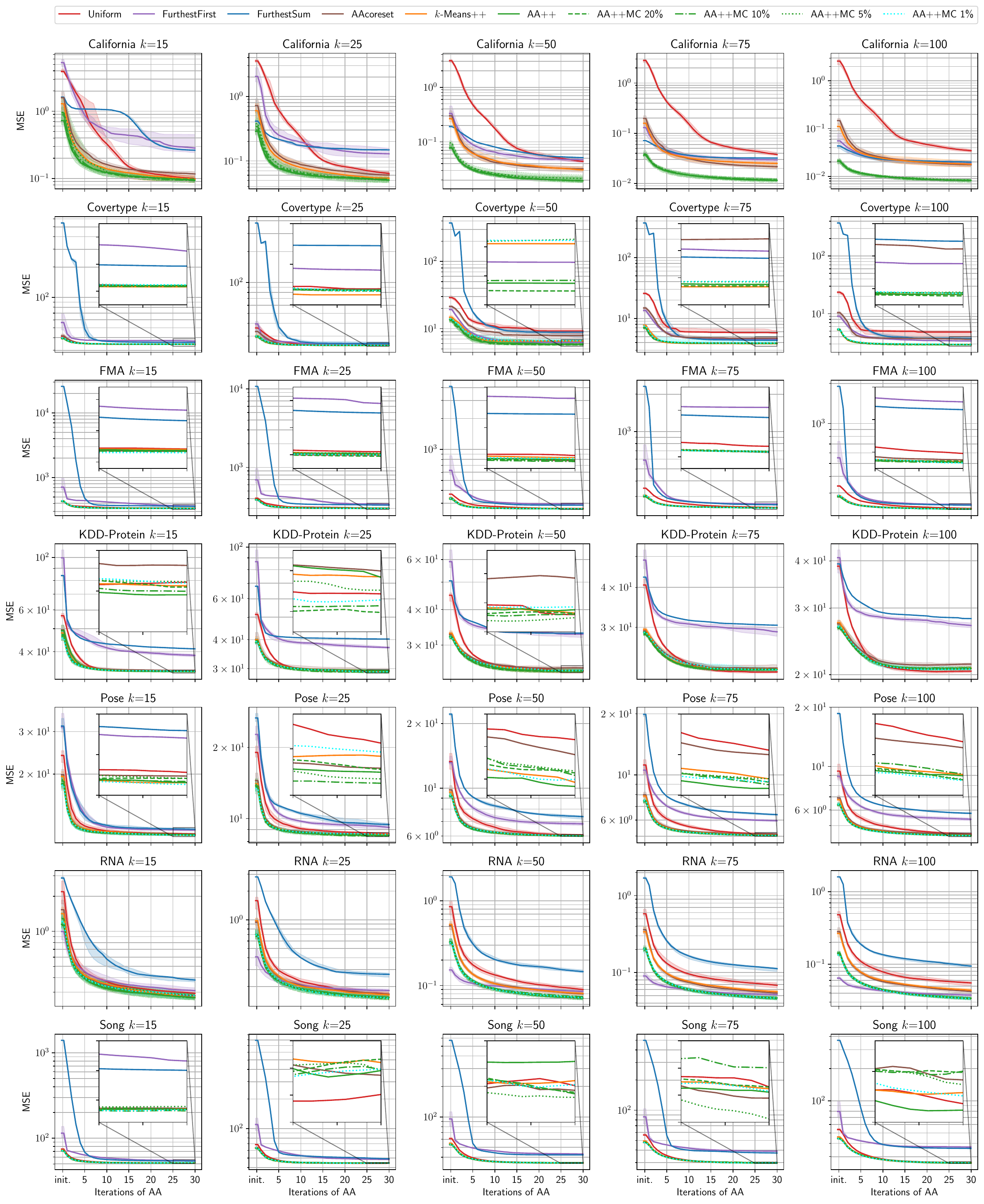}
\caption{Results on California Housing, Covertype, FMA, KDD-Protein, Pose, RNA, and Song using \emph{Standardization} as pre-processing.}
\label{fig:res_main_standardize}
\end{figure}

\begin{figure}[t]
\centering
\includegraphics[width=\textwidth]{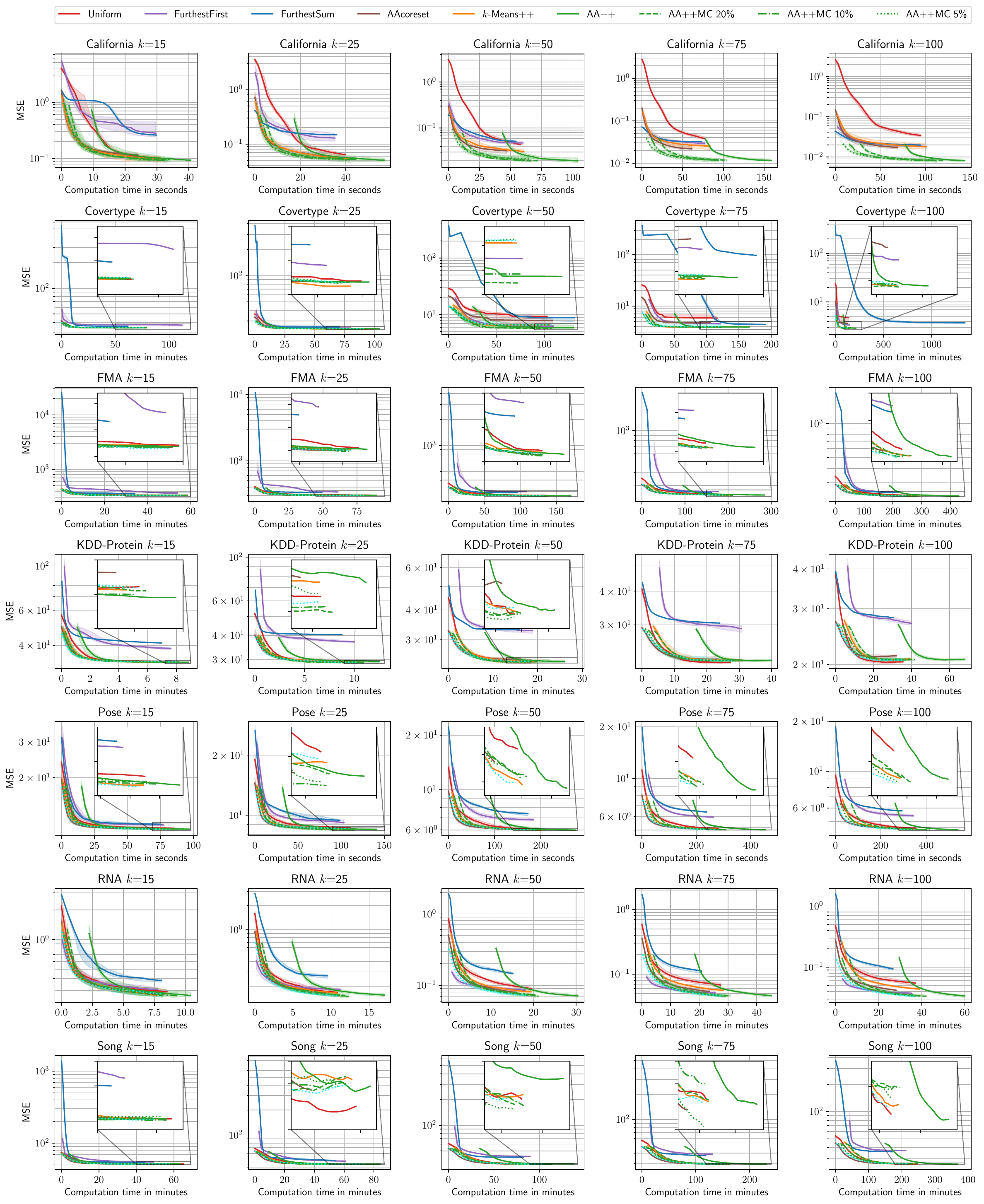}
\caption{Results on California Housing, Covertype, FMA, KDD-Protein, Pose, RNA, and Song using \emph{Standardization} as pre-processing. In contrast to Figure~\ref{fig:res_main_standardize}, this figure shows the computation time of the initialization followed by 30 iterations of archetypal analysis on the $x$-axis.}
\label{fig:res_main_standardize_timing}
\end{figure}

\begin{figure}[t]
\centering
\includegraphics[width=\textwidth]{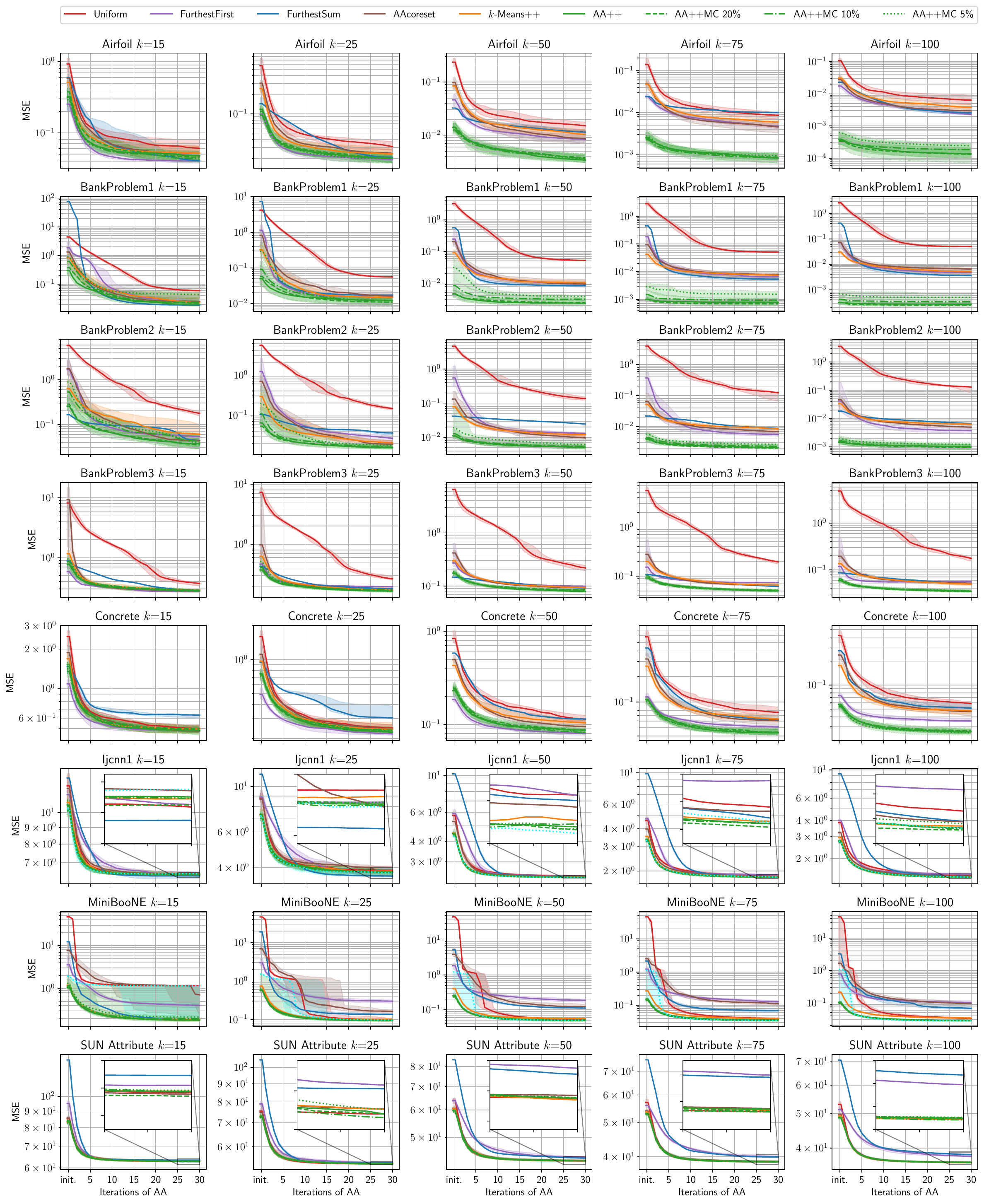}
\caption{Results on Airfoil, Concrete, Banking1, Banking2, Banking3, Ijcnn1, MiniBooNE, and SUN Attribute using \emph{Standardization} as pre-processing.}
\label{fig:res_appendix_standardize}
\end{figure}

\begin{figure}[t]
\centering
\includegraphics[width=\textwidth]{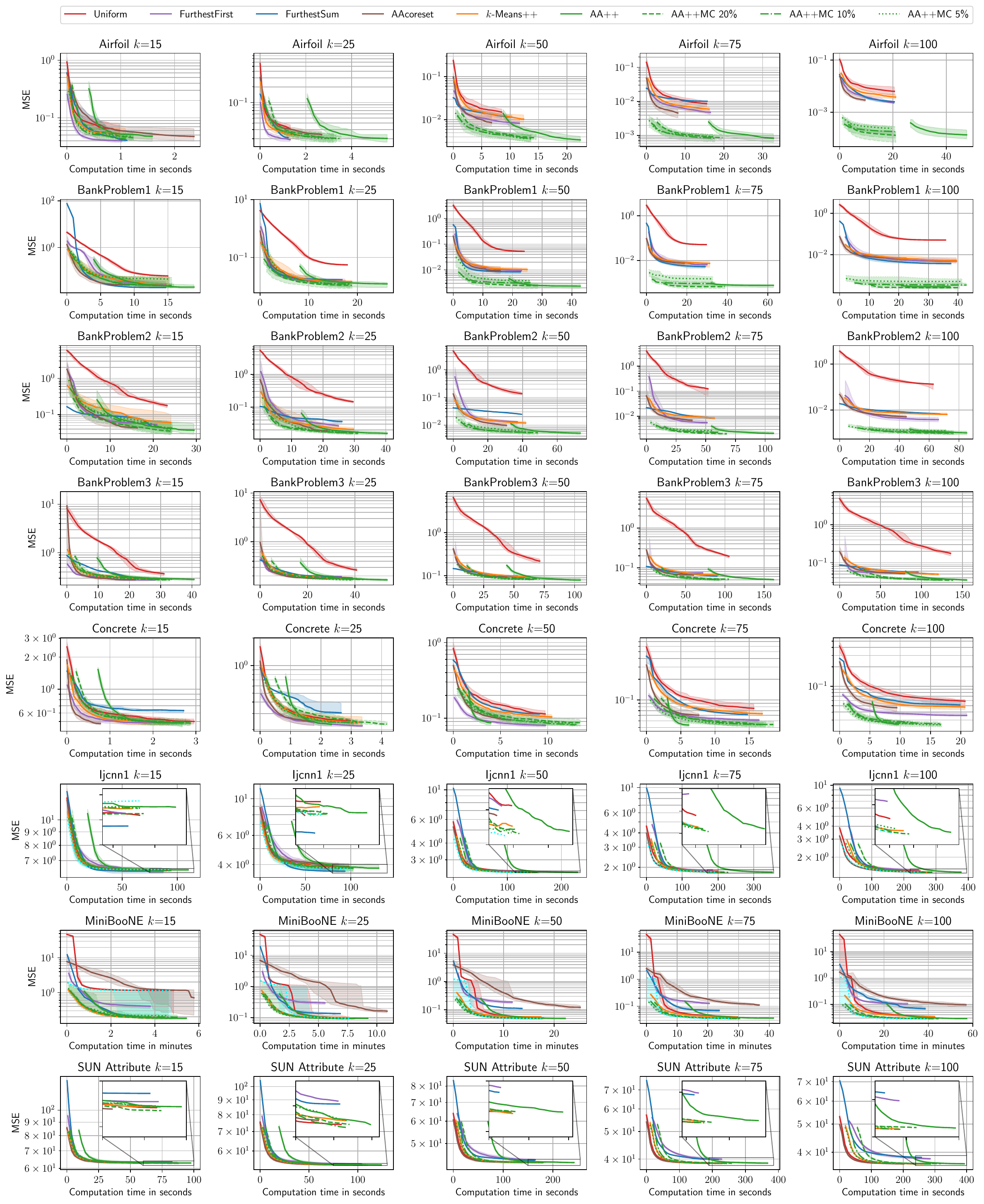}
\caption{Results on Airfoil, Concrete, Banking1, Banking2, Banking3, Ijcnn1, MiniBooNE, and SUN Attribute using \emph{Standardization} as pre-processing. In contrast to Figure~\ref{fig:res_appendix_standardize}, this figure shows the computation time of the initialization followed by 30 iterations of archetypal analysis on the $x$-axis.}
\label{fig:res_appendix_standardize_timing}
\end{figure}

\begin{figure}[t]
\centering
\includegraphics[width=\textwidth]{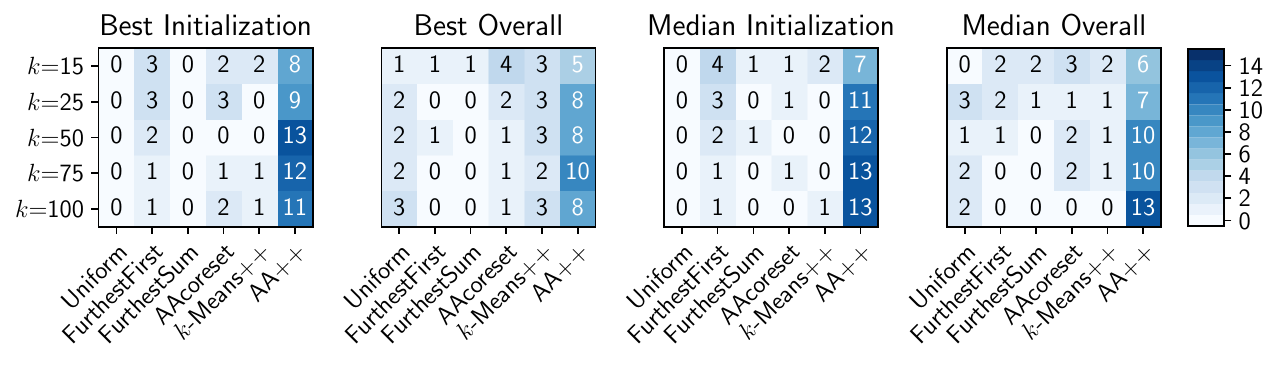}
\caption{Aggregated statistics over 15 data sets (seven data sets from the main paper and eight data sets from the appendix) using \emph{Standardization} as pre-processing. Each table shows how often each initialization method yields the best result for various choices of $k$ under different settings. Best refers to the lowest single seed and median refers to the median over many seeds. We report on the performance after initialization and overall during the optimization.}
\label{fig:table_standardize}
\end{figure}

\begin{figure}[t]
\centering
\includegraphics[width=\textwidth]{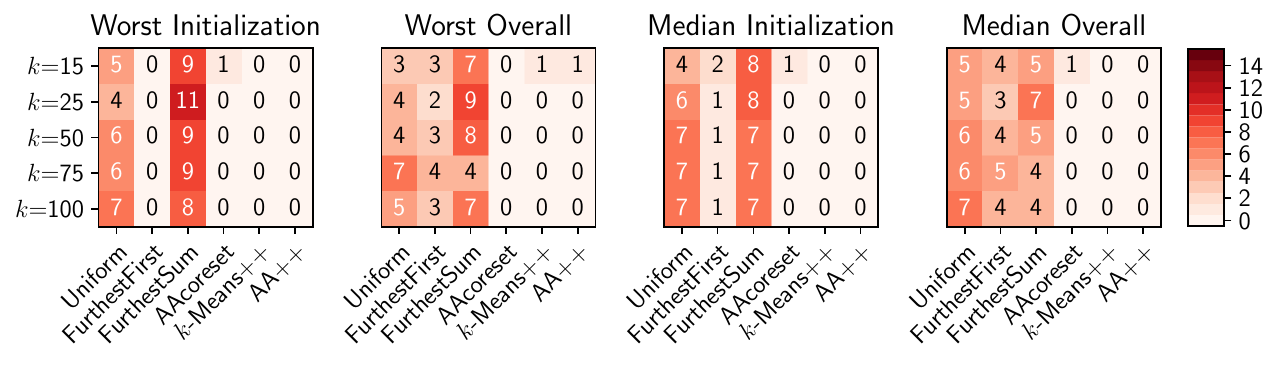}
\caption{Aggregated statistics over 15 data sets (seven data sets from the main paper and eight data sets from the appendix) using \emph{Standardization} as pre-processing. Each table shows how often each initialization method yields the worst result for various choices of $k$ under different settings. Worst refers to the highest error of a single seed and median refers to the median over many seeds. We report on the performance after initialization and overall during the optimization.}
\label{fig:table_standardize_worst}
\end{figure}

\end{document}